\documentclass[sigconf]{acmart}
\AtBeginDocument{%
  }

\setcopyright{none}
\usepackage{microtype}
\usepackage{graphicx}
\usepackage{subcaption}
\usepackage{booktabs} % for professional tables
\usepackage[table]{xcolor}
\usepackage{multirow}
\usepackage{tabularx}
\usepackage{float} % for precise table placement

\usepackage{hyperref}

\usepackage{tabularx}
\usepackage{xltabular}  % for long tables
\definecolor{tableheader}{HTML}{EDF1F6}
\definecolor{rowhighlight}{HTML}{E8F2FF}
\definecolor{bestblue}{HTML}{1F4E79}
\definecolor{secondorange}{HTML}{B35C00}
\definecolor{DomRec}{HTML}{FCE4D6}    % 柔和橙色 - RecSys
\definecolor{DomTS}{HTML}{DEEBF7}     % 柔和蓝色 - Time Series  
\definecolor{DomGML}{HTML}{E2EFDA}    % 柔和绿色 - Graph Learning
\newcommand{\best}[1]{\textcolor{bestblue}{\textbf{#1}}}

\newcommand{\method}{\textsc{PaperRepro}}

\usepackage{amsmath}

\usepackage{amssymb}
\usepackage{pifont}  % for \ding checkmarks
\usepackage{mathtools}
\usepackage{amsthm}
\usepackage{algorithm}
\usepackage{algorithmic}
\usepackage{enumitem}

\DeclareMathOperator*{\argmin}{arg\,min}
\newcommand{\Ncand}{\mathcal{N}_{\text{cand}}}
\newcommand{\Nprune}{\mathcal{N}_{\text{prune}}}

\usepackage[capitalize,noabbrev]{cleveref}

\usepackage{tikz}

\newcommand{\challenge}[1]{\tikz[baseline=(char.base)]{\node[shape=circle,fill=orange!30,inner sep=2pt] (char) {\footnotesize\bfseries #1};}}

\usepackage{listings}
\usepackage{xcolor}
\usepackage[most]{tcolorbox} % most 已经包含 skins/breakable 等常用库

\definecolor{codebg}{rgb}{0.985,0.985,0.985}      % very light gray (still clean)
\definecolor{codeframe}{rgb}{0.35,0.35,0.35}      % darker frame
\definecolor{codetitlebg}{rgb}{0.25,0.25,0.25}    % dark title bar
\definecolor{codetitlefg}{rgb}{1.00,1.00,1.00}    % white title text
\definecolor{patchbg}{rgb}{0.98,0.86,0.55}        % deeper yellow for 
\definecolor{patchred}{rgb}{0.8,0,0}        % 红色 for deletion/removed
\definecolor{patchgreen}{rgb}{0,0.8,0}        % 绿色 for addition/added

\lstdefinestyle{paperpy}{
  language=Python,
  basicstyle=\ttfamily\fontsize{6.2}{7.2}\selectfont,
  numbers=left,
  numberstyle=\ttfamily\tiny\color{codeframe},
  numbersep=6pt,
  xleftmargin=1.6em,
  frame=none,
  breaklines=true,
  breakatwhitespace=true,
  columns=fullflexible,
  keepspaces=true,
  showstringspaces=false,
  tabsize=2,
  escapeinside={(*@}{@*)} % IMPORTANT: allow LaTeX inside listings for highlighting
}
\lstdefinelanguage{diff}{
  morecomment=[f][\color{patchred}]{-},
  morecomment=[f][\color{patchgreen}]{+},
  morecomment=[f][\color{codeframe}]{@},
}

\lstdefinestyle{paperdiff}{
  language=diff,
  basicstyle=\ttfamily\fontsize{6.2}{7.2}\selectfont,
  numbers=left,
  numberstyle=\ttfamily\tiny\color{codeframe},
  numbersep=6pt,
  xleftmargin=1.6em,
  frame=none,
  breaklines=true,
  breakatwhitespace=true,
  columns=fullflexible,
  keepspaces=true,
  showstringspaces=false,
  tabsize=2,
  escapeinside={(*@}{@*)} % IMPORTANT: allow LaTeX inside listings for highlighting
}

\newtcolorbox{codebox}[1]{
  enhanced,
  colback=codebg,
  colframe=codeframe,
  boxrule=0.6pt,
  arc=1.5pt,
  left=2pt,right=2pt,top=2pt,bottom=2pt,
  title=\textbf{#1},
  colbacktitle=codetitlebg,
  coltitle=codetitlefg,
  fonttitle=\small,
}

\newtcolorbox{codeboxHL}[2][]{%
  enhanced,
  colback=codebg,
  colframe=codeframe,
  boxrule=0.6pt,
  arc=1.5pt,
  left=2pt,right=2pt,top=2pt,bottom=2pt,
  title=\textbf{#2},
  colbacktitle=codetitlebg,
  coltitle=codetitlefg,
  fonttitle=\small,
  width=\linewidth,
  valign=top,
  #1
}

\lstdefinestyle{prompttext}{
  language={},
  basicstyle=\ttfamily\fontsize{6.4}{7.4}\selectfont,
  numbers=left,
  numberstyle=\ttfamily\tiny\color{black!55},
  frame=none,
  breaklines=true,
  keepspaces=true,
  showstringspaces=false,
  xleftmargin=2em,
  framexleftmargin=2em,
  numbersep=5pt,
  columns=fullflexible,
  escapechar=,
  escapeinside={},
}

\newtcolorbox{promptbox}[2][]{%
  enhanced,
  breakable,
  colback=black!2,
  colframe=black!55,
  boxrule=0.6pt,
  arc=1.5pt,
  left=3pt,right=3pt,top=3pt,bottom=3pt,
  colbacktitle=black!80,
  coltitle=white,
  title=\textbf{#2},
  fonttitle=\small,
  width=\linewidth,
  pad at break=2mm,
  before skip=6pt,
  after skip=10pt,
  #1
}

\theoremstyle{plain}
\newtheorem{theorem}{Theorem}[section]
\newtheorem{proposition}[theorem]{Proposition}

\theoremstyle{definition}

\newtheorem{assumption}[theorem]{Assumption}
\theoremstyle{remark}

\usepackage[textsize=tiny]{todonotes}

\begin{document}

%%
%% The "title" command has an optional parameter,
%% allowing the author to define a "short title" to be used in page headers.
\title{What Papers Don't Tell You: Recovering Tacit Knowledge for Automated Paper Reproduction}

\author{Lehui Li}
\authornote{Both authors contributed equally to this research.}
\affiliation{%
  \institution{School of Software, Shandong University}
  \country{China}
}

\author{Ruining Wang}
\authornotemark[1]
\affiliation{%
  \institution{School of Software, Shandong University}
  \country{China}
}

\author{Haochen Song}
\affiliation{%
  \institution{School of Software, Shandong University}
  \country{China}
}

\author{Yaoxin Mao}
\affiliation{%
  \institution{Beijing Institute of Technology}
  \country{China}
}

\author{Tong Zhang}
\affiliation{%
  \institution{Zhejiang University}
  \country{China}
}

\author{Yuyao Wang}
\affiliation{%
  \institution{Dept.\ of Math \& Statistics, Boston University}
  \country{USA}
}

\author{Jiayi Fan}
\affiliation{%
  \institution{School of Software, Shandong University}
  \country{China}
}

\author{Yitong Zhang}
\affiliation{%
  \institution{College of AI, Tsinghua University}
  \country{China}
}

\author{Jieping Ye}
\affiliation{%
  \institution{Alibaba Group}
  \country{China}
}

\author{Chengqi Zhang}
\affiliation{%
  \institution{The Hong Kong Polytechnic University}
  \country{Hong Kong SAR, China}
}

\author{Yongshun Gong}
\authornote{Corresponding author.}
\affiliation{%
  \institution{School of Software, Shandong University}
  \country{China}
}

\renewcommand{\shortauthors}{Li et al.}

%%
%% The abstract is a short summary of the work to be presented in the
%% article.
\begin{abstract}
Automated paper reproduction---generating executable code from academic papers---is bottlenecked not by information retrieval but by the tacit knowledge that papers inevitably leave implicit. We formalize this challenge as the progressive recovery of three types of tacit knowledge---relational, somatic, and collective---and propose \method, a graph-based agent framework with a dedicated mechanism for each: node-level relation-aware aggregation recovers relational knowledge by analyzing implementation-unit-level reuse and adaptation relationships between the target paper and its citation neighbors; execution-feedback refinement recovers somatic knowledge through iterative debugging driven by runtime signals; and graph-level knowledge induction distills collective knowledge from clusters of papers sharing similar implementations. On an extended ReproduceBench spanning 3 domains, 10 tasks, and 40 recent papers, \method{} achieves an average performance gap of 10.04\% against official implementations, improving over the strongest baseline by 24.68\%. The code will be publicly released upon acceptance; the repository link will be provided in the final version.
\end{abstract}

%
% The code below is generated by the tool at http://dl.acm.org/ccs.cfm.
% Please copy and paste the code instead of the example below.
%
\begin{CCSXML}
<ccs2012>
 <concept>
  <concept_id>10010147.10010178.10010187</concept_id>
  <concept_desc>Computing methodologies~Machine learning approaches</concept_desc>
  <concept_significance>500</concept_significance>
 </concept>
 <concept>
  <concept_id>10011007.10011006.10011050.10011017</concept_id>
  <concept_desc>Software and its engineering~Source code generation</concept_desc>
  <concept_significance>500</concept_significance>
 </concept>
 <concept>
  <concept_id>10010147.10010257.10010293.10010294</concept_id>
  <concept_desc>Computing methodologies~Natural language processing</concept_desc>
  <concept_significance>300</concept_significance>
 </concept>
</ccs2012>
\end{CCSXML}

\ccsdesc[500]{Computing methodologies~Machine learning approaches}
\ccsdesc[500]{Software and its engineering~Source code generation}
\ccsdesc[300]{Computing methodologies~Natural language processing}

%%
%% Keywords. The author(s) should pick words that accurately describe
%% the work being presented. Separate the keywords with commas.
\keywords{Automated paper reproduction, AI for Science, AI Scientist}

%%
%% This command processes the author and affiliation and title
%% information and builds the first part of the formatted document.
\maketitle

\section{Introduction}
Automated paper reproduction aims to generate executable code implementations from academic papers, substantially reducing the cost of validating and reusing prior scientific findings and accelerating the advance of scientific knowledge~\citep{peng2011reproducible,stodden2016enhancing,xu2023artificial,seo2025paper2codeautomatingcodegeneration,zhao2025autoreproduceautomaticaiexperiment,starace2025paperbenchevaluatingaisability,jimenez2024swebenchlanguagemodelsresolve}.

Early approaches, such as workflow-driven agents, encode expert knowledge into fixed workflows~\citep{seo2025paper2codeautomatingcodegeneration,li2025deepcodeopenagenticcoding,chen2025deepreproducer,lin2025autop2cllmbasedagentframework}. While achieving a degree of automation, they neglect the relational structure among papers, leading to inferior performance~\citep{zhao2025autoreproduceautomaticaiexperiment,hu2025graggraphretrievalaugmentedgeneration}. More recently, graph-augmented retrieval agents retrieve relevant text or code snippets from the citation neighborhood of a paper and directly concatenate them as contextual input~\citep{lewis2021retrievalaugmentedgenerationknowledgeintensivenlp,zhao2025autoreproduceautomaticaiexperiment,edge2025localglobalgraphrag,hu2025graggraphretrievalaugmentedgeneration}. However, since scientific papers are primarily written for peer communication rather than reproduction, authors routinely omit numerous implementation details, giving rise to \emph{tacit knowledge}~\citep{Gundersen_Kjensmo_2018,Gundersen_Gil_Aha_2018,pineau2020improvingreproducibilitymachinelearning,zhao2025autoreproduceautomaticaiexperiment}. This compels us to ask:

\begin{center}
\textit{What papers don't tell you?}
\end{center}

Understanding the tacit knowledge that papers leave unexpressed can provide valuable guidelines and insights for the development of advanced automated paper reproduction~\citep{zhao2025autoreproduceautomaticaiexperiment,shull2002replicating}. According to Collins' taxonomy, tacit knowledge involved in scientific reproducibility falls into three categories, each progressively more difficult to acquire~\citep{collins2019tacit}:

\begin{figure}[t]
\centering
\includegraphics[width=\columnwidth]{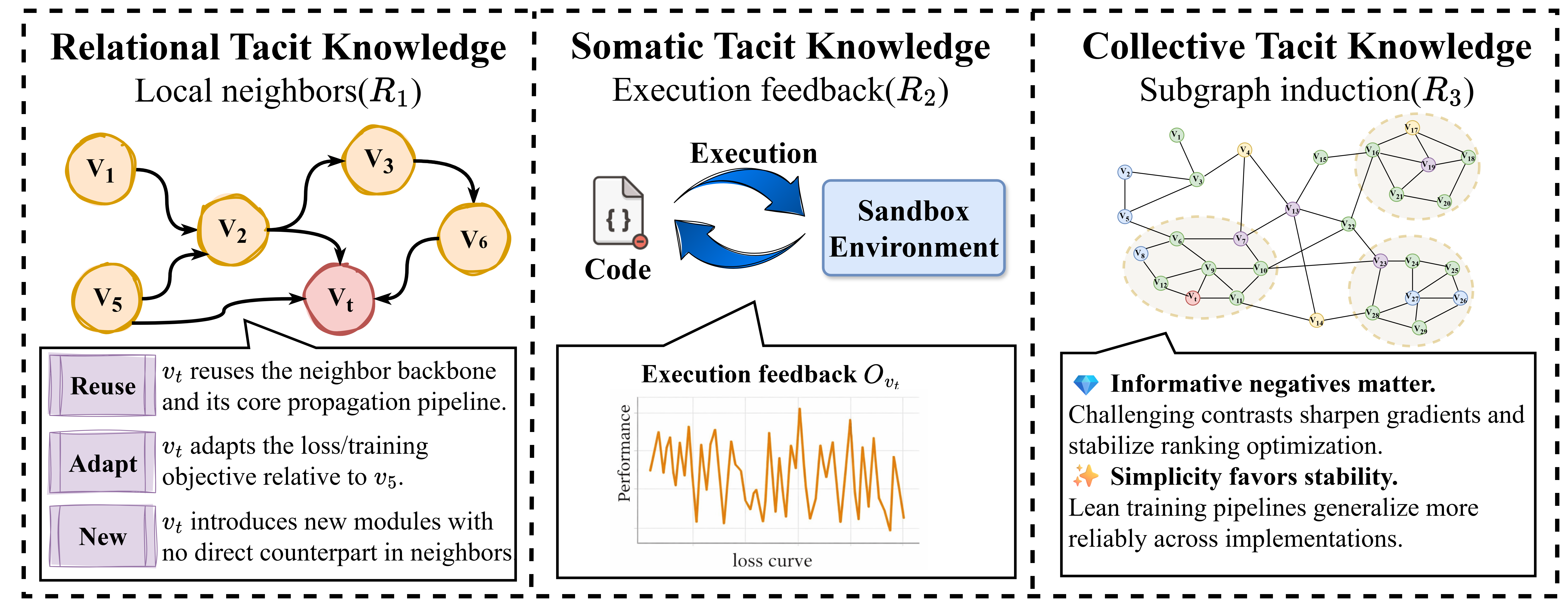}
\caption{Illustrative examples of the three types of tacit knowledge in automated paper reproduction. \textbf{Left}: Relational---implementation units are reused, adapted, or newly created relative to neighbor papers. \textbf{Middle}: Somatic---execution feedback from a sandbox environment guides iterative refinement. \textbf{Right}: Collective---common practices are induced from clusters of related implementations.}
\label{fig:motivation}
\end{figure}

\challenge{1}~\emph{Relational Tacit Knowledge} (Figure~\ref{fig:motivation}, left): authors often omit implementation details when they assume readers share familiarity with prior work. This knowledge concerns how prior implementations are reused, adapted, or extended, and can usually be recovered by analyzing the relationship between the target paper and its related work~\citep{collins2019tacit,zhao2025autoreproduceautomaticaiexperiment}.

\challenge{2}~\emph{Somatic Tacit Knowledge} (Figure~\ref{fig:motivation}, middle): knowledge rooted in a practitioner's experience and intuition, difficult to articulate explicitly, but recoverable through iterative experimentation and refinement based on observed runtime behavior~\citep{collins2019tacit,jimenez2024swebenchlanguagemodelsresolve,yang2024sweagentagentcomputerinterfacesenable,wang2025openhandsopenplatformai}.

\challenge{3}~\emph{Collective Tacit Knowledge} (Figure~\ref{fig:motivation}, right): knowledge embedded in the collective practices of a research community, difficult for any individual to articulate explicitly~\citep{collins2019tacit}. It manifests as community-wide conventions---such as training tricks and high-level methodological insights---and can only be acquired by induction across large-scale implementations~\citep{shull2002replicating,10.1145/3736731.3746141}. Notably, a paper's local neighborhood alone is insufficient for recovering such knowledge, since its restricted receptive field~\citep{li2018deeper} cannot provide the generalizable and transferable insights that collective practices entail.

Such tacit knowledge cannot be acquired by retrieval-augmented methods alone, as it demands reasoning beyond direct retrieval~\citep{amiraz-etal-2025-distracting,fang2024trace,sun2025enhancing}.

Building on these insights, we propose \method, a graph-based agent framework for automated paper reproduction. Our main contributions are:
\begin{itemize}[leftmargin=*]
\item \textbf{Tacit Knowledge Formulation.} We cast automated paper reproduction as the progressive recovery of tacit knowledge inherent in scholarly writing and delineate three constituent categories---Relational, Somatic, and Collective---ranked by acquisition difficulty.
\item \textbf{Graph-Based Agent Framework.} We propose \method, which leverages citation-graph structure to recover each type of tacit knowledge through a dedicated mechanism: node-level relation-aware aggregation for relational knowledge, execution-feedback refinement for somatic knowledge, and graph-level knowledge induction for collective knowledge.
\item \textbf{Extended Benchmark.} We extend ReproduceBench to 10 tasks spanning 3 domains, comprising 191 papers for training, 30 for validation, and 40 for testing, establishing a more comprehensive benchmark for automated reproduction.
\item \textbf{State-of-the-Art Results.} \method{} achieves an average performance gap of 10.04\% against official implementations, improving over the strongest baseline by 24.68\%, with consistent gains across all three domains.
\end{itemize}

\section{Related Work}

\paragraph{\textbf{LLM-Driven Automated Paper Reproduction.}}
Recent work has framed paper-to-code reproduction as an end-to-end programming task using LLM-based agents~\citep{seo2025paper2codeautomatingcodegeneration,li2025deepcodeopenagenticcoding,zhao2025autoreproduceautomaticaiexperiment}. Paper2Code~\citep{seo2025paper2codeautomatingcodegeneration} employs a multi-agent pipeline (planning, analysis, generation) to produce dependency-consistent, executable code repositories. AutoReproduce~\citep{zhao2025autoreproduceautomaticaiexperiment} introduces a paper-lineage mechanism to retrieve text and code snippets from cited references, and enhances executability via unit-test-driven development. DeepCode~\citep{li2025deepcodeopenagenticcoding} alleviates context bottlenecks through compressed indexing, retrieval-based knowledge injection, and closed-loop verification. However, Collins' empirical analysis of scientific reproduction~\citep{collins2019tacit} identifies three progressively harder categories of tacit knowledge: (1)~relational tacit knowledge---implementation details omitted when authors assume familiarity with related work; (2)~somatic tacit knowledge---knowledge rooted in practitioners' experience and intuition that is difficult to articulate explicitly; and (3)~collective tacit knowledge---knowledge embedded in the collective practices of a research community. Effectively recovering these forms of tacit knowledge is key to improving reproduction quality and reliability.

\paragraph{\textbf{Graph-Augmented LLM Agents.}}
Graph structures have recently been introduced as explicit scaffolds for LLM agent reasoning, such as graph-augmented RAG~\citep{edge2025localglobalgraphrag,hu2025graggraphretrievalaugmentedgeneration} and Graph of Thoughts~\citep{Besta_2024}. Inspired by the well-established effectiveness of multi-scale reasoning in graph learning~\citep{xing2024moreoverglobalizingproblemgraph, zhangrestricted}---where node-level neighborhood aggregation captures local information and graph-level global analysis overcomes the restricted local receptive field to extract generalizable, transferable patterns---recent work has begun incorporating this paradigm into LLM agents to improve performance~\citep{zhang2025planovergraphparallelablellmagent}. However, how to effectively leverage multi-scale graph reasoning to recover tacit scientific knowledge from papers and thereby improve automated reproduction remains an open problem.

\begin{figure*}[t]
  \centering
  \includegraphics[width=\linewidth]{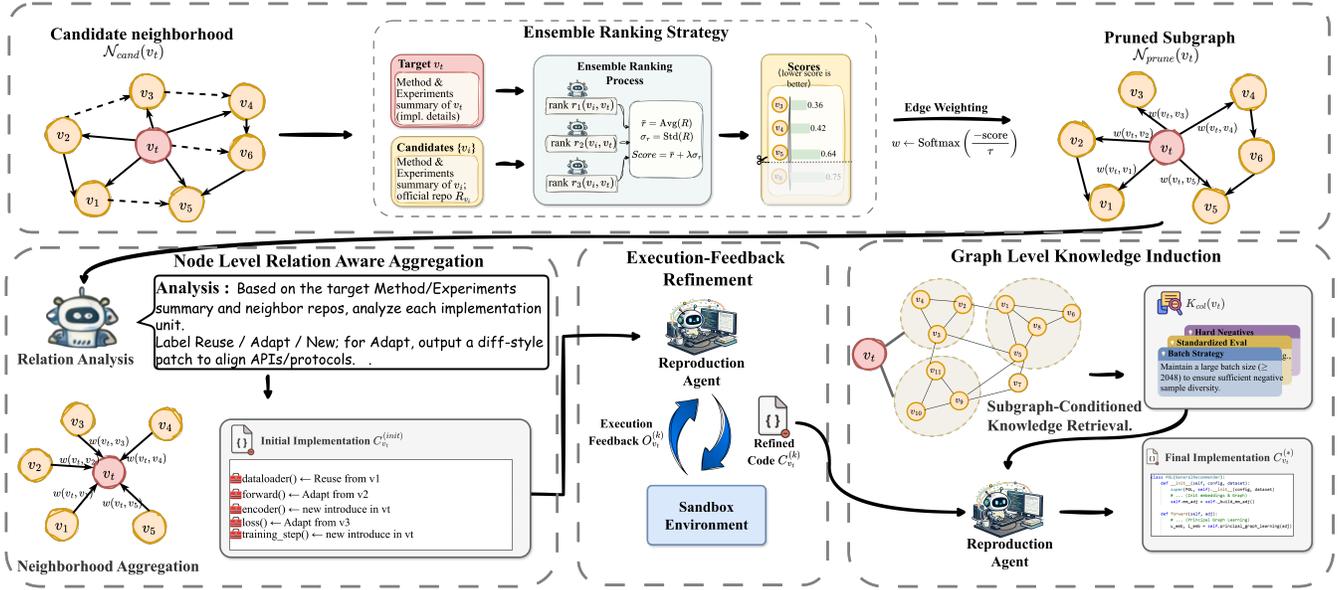}
  \caption{\textbf{Overview of \method.} The framework comprises three stages: SSGP prunes the citation graph to retain implementation-relevant neighbors; node-level relation-aware aggregation analyzes reuse/adapt/new relationships and assembles an initial implementation; execution-feedback refinement and graph-level knowledge induction iteratively improve it via runtime debugging and community-induced knowledge.}
  \label{fig:pipeline}
\end{figure*}

\section{Preliminaries}
\label{sec:preliminaries}

\subsection{Problem Definition}
\label{subsec:problem_def}

Each paper $v$ is represented as a tuple $v = (\mathcal{D}_v, \mathcal{C}_v)$, where $\mathcal{D}_v$ denotes its document and $\mathcal{C}_v$ the corresponding code implementation. Given a target paper $v_t$, automated paper reproduction seeks to generate $\hat{\mathcal{C}}_{v_t}$ that minimizes the reproduction quality loss:
\begin{equation}
\min_{\hat{\mathcal{C}}_{v_t}} \; \mathcal{L}\bigl(\mathcal{P}(\hat{\mathcal{C}}_{v_t}),\; \mathcal{P}(\mathcal{C}_{v_t})\bigr),
\end{equation}
where $\mathcal{P}(\cdot)$ evaluates the execution performance of a code implementation and $\mathcal{L}$ measures the discrepancy between the generated and official results.

\subsection{Scientific Graph}
\label{subsec:skg}

We model scientific literature as a scientific graph $\mathcal{G} = (\mathcal{V}, \mathcal{E})$, where the node set $\mathcal{V}$ represents the collection of papers, with each node $v \in \mathcal{V}$ corresponding to a paper and its attributes $(\mathcal{D}_v, \mathcal{C}_v)$ as defined in Section~\ref{subsec:problem_def}. The edge set $\mathcal{E} \subseteq \mathcal{V} \times \mathcal{V}$ represents citation relationships between papers, where each edge $(v, u) \in \mathcal{E}$ indicates that paper $v$ directly cites paper $u$. For any paper $v$, we define its neighbor set as $\mathcal{N}(v) = \{u \in \mathcal{V} : (v, u) \in \mathcal{E}\}$.

\subsection{Definition of Tacit Knowledge}
\label{subsec:tacit_knowledge}

\paragraph{\textbf{Knowledge Acquisition Difficulty.}}
Let $\Phi_{v_t}$ denote the set of implementation decisions required to reproduce $\mathcal{C}_{v_t}$, and let $\mathcal{R}$ denote the available resources accessible during reproduction. Given an agent $\mathcal{A}$, we write $\mathcal{R} \vdash_{\mathcal{A}} \phi$ if $\mathcal{A}$ can correctly determine decision $\phi$ under $\mathcal{R}$, and assume $\vdash_{\mathcal{A}}$ is monotone with respect to $\mathcal{R}$. Let $\mathcal{O}_v$ denote the execution feedback observed during the reproduction of paper $v$ (including training logs, error messages, and evaluation metrics). We define a hierarchy of progressively enriched resources:
\begin{equation}
\begin{aligned}
\mathcal{R}_0 &= \{\mathcal{D}_{v_t}\}, \\
\mathcal{R}_1 &= \mathcal{R}_0 \;\cup\; \{(\mathcal{D}_{v_i}, \mathcal{C}_{v_i})\}_{v_i \in \mathcal{N}(v_t)}, \\
\mathcal{R}_2 &= \mathcal{R}_1 \;\cup\; \{\mathcal{O}_{v_t}\}, \\
\mathcal{R}_3 &= \mathcal{R}_2 \;\cup\; \{(\mathcal{D}_v, \mathcal{C}_v)\}_{v \in \mathcal{S}^{(v_t)}} \;\cup\; \{\mathcal{O}_v\}_{v \in \mathcal{S}^{(v_t)}}.
\end{aligned}
\end{equation}
Here $\mathcal{N}(v_t)$ denotes the citation neighbors of $v_t$, while $\mathcal{S}^{(v_t)}$ denotes papers that share collective implementation practices with $v_t$, covering a broader scope than the local neighborhood. Each level successively introduces neighbor documents and code, the execution feedback of reproducing the target paper, and community-wide implementations along with their execution feedback, reflecting increasing difficulty of knowledge acquisition.

\paragraph{\textbf{Tacit Knowledge.}}
Tacit knowledge is defined as the set of decisions that cannot be determined from the paper document alone but become recoverable given the full resources:
\begin{equation}
\mathcal{K}(v_t) = \{\phi \in \Phi_{v_t} \mid \mathcal{R}_0 \nvdash_{\mathcal{A}} \phi,\; \mathcal{R}_3 \vdash_{\mathcal{A}} \phi\},
\end{equation}
and is decomposed into three categories according to acquisition difficulty:

\begin{itemize}[leftmargin=*]
\item \emph{Relational Tacit Knowledge}: decisions recoverable once neighbor code is available, e.g., module reuse and adaptation relationships omitted across papers. $\mathcal{K}_{\mathrm{rel}}(v_t) = \{\phi \in \Phi_{v_t} \mid \mathcal{R}_0 \nvdash_{\mathcal{A}} \phi,\; \mathcal{R}_1 \vdash_{\mathcal{A}} \phi\}$.
\item \emph{Somatic Tacit Knowledge}: decisions determinable only through code execution and observation of execution feedback, e.g., judging implementation correctness from training logs. $\mathcal{K}_{\mathrm{som}}(v_t) = \{\phi \in \Phi_{v_t} \mid \mathcal{R}_1 \nvdash_{\mathcal{A}} \phi,\; \mathcal{R}_2 \vdash_{\mathcal{A}} \phi\}$.
\item \emph{Collective Tacit Knowledge}: decisions obtainable only by induction over community-wide implementations, e.g., standard preprocessing conventions. $\mathcal{K}_{\mathrm{col}}(v_t) = \{\phi \in \Phi_{v_t} \mid \mathcal{R}_2 \nvdash_{\mathcal{A}} \phi,\; \mathcal{R}_3 \vdash_{\mathcal{A}} \phi\}$.
\end{itemize}

\begin{figure*}[t]
  \centering
  \includegraphics[width=1\linewidth]{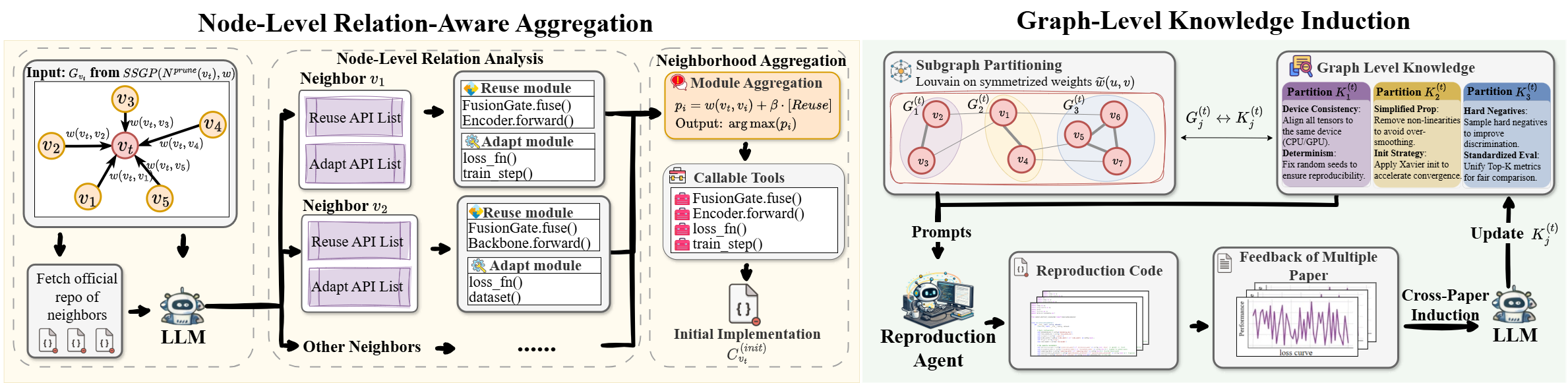}
  \caption{\textbf{Node-Level and Graph-Level Graph Reasoning.} \textbf{Left (Node-Level Relation-Aware Aggregation)}: For each neighbor $v_i \in \Nprune(v_t)$, relation analysis produces a structured annotation $\Gamma_{t,i}$ that categorizes implementation units into directly reusable ($\mathcal{U}^{\mathrm{reuse}}$), adaptable ($\mathcal{U}^{\mathrm{adapt}}$), and new ($\mathcal{U}^{\mathrm{new}}$); neighborhood aggregation selects among competing candidates by priority $p = w(v_t, v_i) + \beta \cdot \mathbb{1}[\text{reuse}]$ to construct $\mathcal{C}_{v_t}^{(\mathrm{init})}$. \textbf{Right (Graph-Level Knowledge Induction)}: The paper graph is partitioned into subgraphs $\{\mathcal{G}^{(t)}_j\}$ via Louvain clustering on SSGP edge weights; within each subgraph, the agent reproduces member papers, performs cross-paper induction over execution feedback, and writes recurring patterns into the subgraph knowledge base $\mathcal{K}^{(t)}_j$.}
  \label{fig:graphreasoning}
\end{figure*}

\section{The Proposed Method}

As illustrated in Figure~\ref{fig:pipeline}, \method{} operates on the scientific graph $\mathcal{G}$ in two stages. First, Semantic Scientific Graph Pruning (SSGP, Section~\ref{sec:semantic_pruning}) filters the citation neighborhood of the target paper $v_t$ to retain only implementation-relevant neighbors, yielding a pruned weighted subgraph $\Nprune(v_t)$. Second, Graph-Based Tacit Knowledge Recovery (Section~\ref{subsec:graph_tk_recovery}) progressively recovers the three types of tacit knowledge at complementary granularities: at the \emph{node level}, relation-aware aggregation analyzes reuse/adapt/new relationships between $v_t$ and each neighbor to construct an initial implementation $\mathcal{C}_{v_t}^{(\mathrm{init})}$; execution-feedback refinement then iteratively debugs it into $\mathcal{C}_{v_t}^{(\mathrm{ref})}$; at the \emph{graph level}, knowledge induction partitions the paper graph into implementation-coherent subgraphs and extracts community-shared practices into subgraph knowledge bases $\mathcal{K}^{(t)}_j$, which are retrieved to produce the final implementation $\mathcal{C}_{v_t}^{(*)}$.

\subsection{Semantic Scientific Graph Pruning}
\label{sec:semantic_pruning}
Not all citation neighbors are equally relevant to reproduction; to filter semantic noise, we construct a pruned weighted subgraph that retains only implementation-relevant neighbors.

\paragraph{\textbf{Semantic Relevance Measurement.}}
Following ~\citep{zhao2025autoreproduceautomaticaiexperiment}, we retain citation neighbors of $v_t$ with publicly available implementations to form the candidate neighborhood~$\Ncand(v_t)$. To quantify relevance, we construct a summary $S(\cdot)$ for each paper from its Method and Experiments sections. We adopt an ensemble ranking strategy: $K$ independent LLM reviewers each rank all candidates $u \in \Ncand(v_t)$ by implementation relevance to $v_t$. Let $r_k(u, v_t)$ denote the rank assigned to candidate $u$ by the $k$-th reviewer. We aggregate results into the mean rank $\bar{r}(u, v_t) = \frac{1}{K} \sum_{k=1}^{K} r_k(u, v_t)$ and rank standard deviation $\sigma_r(u, v_t)$. The composite score balances relevance and reviewer consensus:
\begin{equation}
\mathrm{score}(u, v_t) = \bar{r}(u, v_t) + \lambda \cdot \sigma_r(u, v_t),
\end{equation}
where $\lambda$ is the uncertainty penalty coefficient.

\paragraph{\textbf{Pruned Graph Construction.}}
Based on the composite scores, we select the $K_{\text{keep}}$ candidates with lowest scores to obtain the pruned neighborhood:
\begin{equation}
\Nprune(v_t) = \argmin_{\substack{S \subseteq \Ncand(v_t),\\ |S| = K_{\text{keep}}}} \sum_{u \in S} \mathrm{score}(u, v_t).
\end{equation}
To support subsequent graph reasoning, we map the composite scores to edge weights, obtaining a normalized weight distribution from $v_t$ to each retained neighbor. Specifically, we apply softmax to convert scores into normalized weights:
\begin{equation}
w(v_t, u) = \frac{\exp\bigl(-\mathrm{score}(u, v_t)\bigr)}{\sum_{u' \in \Nprune(v_t)} \exp\bigl(-\mathrm{score}(u', v_t)\bigr)}.
\end{equation}

\subsection{Graph-Based Tacit Knowledge Recovery}
\label{subsec:graph_tk_recovery}

Based on the pruned neighborhood $\Nprune(v_t)$ obtained from SSGP, we progressively recover the three types of tacit knowledge defined in Section~\ref{subsec:tacit_knowledge}. Node-Level Relation-Aware Aggregation recovers relational tacit knowledge $\mathcal{K}_{\mathrm{rel}}$ by analyzing implementation relationships between the target and its neighbors to produce an initial implementation. Execution-Feedback Refinement recovers somatic tacit knowledge $\mathcal{K}_{\mathrm{som}}$ through iterative debugging driven by execution feedback. Graph-Level Knowledge Induction recovers collective tacit knowledge $\mathcal{K}_{\mathrm{col}}$ by inducing community-shared practices from cross-paper reproduction at the subgraph level.

\subsubsection{Node-Level Relation-Aware Aggregation}
\label{subsubsec:node-level}

As illustrated in Figure \ref{fig:graphreasoning} (left), this module aims to recover implementation-unit-level relationships between $v_t$ and its neighbors under resource level $\mathcal{R}_1$ (Section~\ref{subsec:tacit_knowledge}), thereby providing an executable initial scaffold. However, a straightforward aggregation of neighboring code snippets yields only fragmentary signals, insufficient to systematically characterize how $\mathcal{C}_{v_t}$ reuses, adapts, or extends implementations from its neighbors. We therefore propose Node-Level Relation-Aware Aggregation, which first analyzes implementation relationships between the target paper and each neighbor node, and then aggregates neighbor implementations via the paper graph to construct an initial implementation $\mathcal{C}_{v_t}^{(\mathrm{init})}$.

\paragraph{\textbf{Node-Level Relation Analysis.}}
For each pruned neighbor $v_i \in \Nprune(v_t)$, this step takes $(\mathcal{D}_{v_t}, \mathcal{D}_{v_i}, \mathcal{C}_{v_i})$ as input, where $\mathcal{D}_v$ denotes the paper document and $\mathcal{C}_v$ the corresponding implementation. We refer to each functional component required to reproduce the target paper (e.g., a particular loss function, sampling strategy, or encoder) as an \emph{implementation unit}~\cite{luo2025executable}. The step outputs a structured relation annotation $\Gamma_{t,i}$ between $v_t$ and $v_i$ at the implementation-unit level:
\begin{equation}
\Gamma_{t,i} = \bigl(\mathcal{U}^{\mathrm{reuse}}_{t,i},\, \mathcal{U}^{\mathrm{adapt}}_{t,i},\, \mathcal{U}^{\mathrm{new}}_{t,i},\, \{\Delta_{t,i}^{m}\}_{m \in \mathcal{U}^{\mathrm{adapt}}_{t,i}}\bigr),
\end{equation}
where $\mathcal{U}^{\mathrm{reuse}}_{t,i}$ is the set of implementation units from $v_i$ that are directly reusable, $\mathcal{U}^{\mathrm{adapt}}_{t,i}$ is the set of units that must be adapted from $v_i$ to match $v_t$, and $\mathcal{U}^{\mathrm{new}}_{t,i}$ is the set of units lacking a direct counterpart in $v_i$ and thus requiring new implementation. For each $m \in \mathcal{U}^{\mathrm{adapt}}_{t,i}$, $\Delta_{t,i}^{m}$ specifies an actionable diff instruction~\cite{bouzenia2024repairagent}. Based on $\Gamma_{t,i}$, we further encapsulate these implementation units into a set of callable APIs $\mathcal{API}_{t,i}$ that the reproduction agent can directly invoke: for $u \in \mathcal{U}^{\mathrm{reuse}}_{t,i}$, the corresponding implementation is located and extracted as a directly reusable API; for $u \in \mathcal{U}^{\mathrm{adapt}}_{t,i}$, the original implementation is extracted and an adapted version is generated according to $\Delta_{t,i}^{u}$; for $u \in \mathcal{U}^{\mathrm{new}}_{t,i}$, only a placeholder stub is generated for subsequent completion. All encapsulated APIs undergo callability validation (dependency resolution, import checking, and interface-level smoke testing) to ensure they can be reliably invoked by the agent.

\paragraph{\textbf{Neighborhood Aggregation.}}
This stage takes as input the candidate API sets $\{\mathcal{API}_{t,i}\}_{v_i \in \Nprune(v_t)}$ from all neighbors together with the edge weights $\{w(v_t, v_i)\}$, and outputs the initial implementation $\mathcal{C}_{v_t}^{(\mathrm{init})}$. Since different neighbors may provide competing implementations for the same implementation unit $u$, we select among them based on graph edge weights. Specifically, let $\{api_i^{u}\}$ denote the candidate implementations from different neighbors (where $api_i^{u} \in \mathcal{API}_{t,i}$ and is labeled as reuse or adapt). We compute a priority score for each candidate:
\begin{equation}
p(api_i^{m}) = w(v_t, v_i) + \beta \cdot \mathbb{1}[api_i^{m}\ \text{is reuse}],
\end{equation}
where $\beta > 0$ is a reuse bias coefficient that favors direct reuse over adapted implementations when relevance scores are comparable, thereby reducing the risk of implementation drift introduced by code transformation. For each unit $u$, the candidate with the highest $p(\cdot)$ is selected; units covered only by $\mathcal{U}^{\mathrm{new}}_{t,i}$ placeholders are deferred to the subsequent Execution-Feedback Refinement stage. The resulting $\mathcal{C}_{v_t}^{(\mathrm{init})}$ serves as the initial implementation aggregated from the neighborhood.

\subsubsection{Execution-Feedback Refinement}
\label{subsubsec:exec-feedback}

The initial implementation $\mathcal{C}_{v_t}^{(\mathrm{init})}$ may still fail to execute correctly or exhibit performance deviations due to implicit experiential decisions---such as tensor dimension conventions, numerical stability handling, and training schedule configurations. These decisions correspond to somatic tacit knowledge $\mathcal{K}_{\mathrm{som}}(v_t)$, which can only be localized by observing execution feedback $\mathcal{O}_{v_t}$. Following ReAct~\citep{yao2023reactsynergizingreasoningacting} for iterative refinement, we augment diagnostic signals with execution performance discrepancies relative to the paper-reported results.

\paragraph{\textbf{Iterative Refinement.}}
This module takes as input the paper document, the initial implementation, and its execution feedback, i.e., $(\mathcal{D}_{v_t}, \mathcal{C}_{v_t}^{(\mathrm{init})}, \mathcal{O}_{v_t})$, and outputs a refined implementation $\mathcal{C}_{v_t}^{(\mathrm{ref})}$ together with the final execution log. Following ReAct, at the $k$-th iteration the current implementation $\mathcal{C}_{v_t}^{(k)}$ is executed for training and evaluation, yielding execution feedback $\mathcal{O}_{v_t}^{(k)}$---including error messages, logs, and performance metrics $\mathcal{P}(\mathcal{C}_{v_t}^{(k)})$. The execution feedback $\mathcal{O}_{v_t}^{(k)}$, together with $\mathcal{D}_{v_t}$ and the current implementation, is then fed to the reproduction agent, which generates an executable repair plan:
\begin{equation}
\mathrm{Plan}^{(k)} = \bigl(\mathcal{U}^{(k)}_{\mathrm{edit}},\; \{\delta^{u,(k)}\}_{u \in \mathcal{U}^{(k)}_{\mathrm{edit}}}\bigr),
\end{equation}
where $\mathcal{U}^{(k)}_{\mathrm{edit}}$ is the set of implementation units to be modified and $\delta^{u,(k)}$ is the corresponding code change. Applying $\mathrm{Plan}^{(k)}$ to $\mathcal{C}_{v_t}^{(k)}$ yields $\mathcal{C}_{v_t}^{(k+1)}$. This loop continues until the reproduction loss $\mathcal{L}(\mathcal{P}(\mathcal{C}_{v_t}^{(k)}), \mathcal{P}(\mathcal{C}_{v_t}))$ drops below a predefined threshold or a maximum iteration budget is exhausted. The final output $\mathcal{C}_{v_t}^{(\mathrm{ref})}$ serves as the base implementation for subsequent graph-level knowledge induction.

\subsubsection{Graph-Level Knowledge Induction}
\label{subsubsec:graph-level}

Through Sections~\ref{subsubsec:node-level} and~\ref{subsubsec:exec-feedback}, we recover $\mathcal{K}_{\mathrm{rel}}$ and $\mathcal{K}_{\mathrm{som}}$ to obtain the refined implementation $\mathcal{C}_{v_t}^{(\mathrm{ref})}$. However, transferable knowledge embedded in the collective practices of a research community---such as typical hyperparameter ranges, training tricks---is difficult to acquire reliably from the execution feedback of a single paper and requires induction over large-scale cross-paper implementation experience. Such knowledge corresponds to collective tacit knowledge $\mathcal{K}_{\mathrm{col}}$, recoverable only at resource level $\mathcal{R}_3$. A naive strategy is to aggregate experience from randomly sampled papers, yet a single domain often hosts competing implementation paradigms (e.g., MLP-based vs.\ Transformer-based approaches in time series forecasting)~\cite{zeng2023transformers}. Inducing from papers without tight implementation associations makes it difficult to extract generalizable, transferable knowledge. Therefore, we leverage the implementation similarity encoded by edge weights to partition the paper graph into subgraphs whose members share tight implementation associations, and then induce $\mathcal{K}_{\mathrm{col}}$ within each subgraph from reproduction outcomes. The approach is shown in Figure \ref{fig:graphreasoning} (right).

\paragraph{\textbf{Subgraph Partition.}}
For the task-level paper graph $\mathcal{G}^{(t)} = (\mathcal{S}^{(v_t)}, \mathcal{E}^{(t)})$, the SSGP-derived edge weights $w(\cdot, \cdot)$ capture the semantic similarity in method implementation between papers. We symmetrize the directed weights as $\tilde{w}(u, v) = \frac{1}{2}\bigl(w(u, v) + w(v, u)\bigr)$ and apply the Louvain algorithm~\citep{blondel2008fast} on the weighted undirected graph $(\mathcal{S}^{(v_t)}, \tilde{w})$ to obtain a subgraph partition $\{\mathcal{G}^{(t)}_j\}_{j=1}^{M_t}$. Papers within the same subgraph share tight implementation associations, and subsequent induction is performed only within each subgraph, thereby restricting the observation scope of $\mathcal{R}_3$ to paradigm-coherent paper collections.

\paragraph{\textbf{Subgraph-Level Knowledge Induction.}}
Each subgraph $\mathcal{G}^{(t)}_j$ maintains an independent knowledge base $\mathcal{K}^{(t)}_j$. At each training epoch, the agent reproduces all papers $v \in \mathcal{G}^{(t)}_j$ via the pipeline of Sections~\ref{subsubsec:node-level} and~\ref{subsubsec:exec-feedback}, obtaining each paper's refined implementation $\mathcal{C}_v^{(\mathrm{ref})}$ and execution feedback $\mathcal{O}_v$. Cross-paper induction is then performed over the collected $\{(\mathcal{C}_v^{(\mathrm{ref})}, \mathcal{O}_v)\}_{v \in \mathcal{G}^{(t)}_j}$: an LLM identifies recurring patterns within the subgraph, retaining only entries whose frequency exceeds threshold $\eta$ and that yield stable reproduction gains on the validation set $\mathcal{S}^{(v_t)}_{\mathrm{val}}$, and writes them into $\mathcal{K}^{(t)}_j$ in structured form. The optimal knowledge base for each subgraph, $\{\mathcal{K}^{(t,*)}_j\}_{j=1}^{M_t}$, is saved.

\paragraph{\textbf{Subgraph-Conditioned Knowledge Retrieval.}}
At inference time, given a target paper $v_t$, we compute its affinity $s_j$ with each subgraph using the edge weights:
\begin{equation}
s_j = \sum_{u \in \Nprune(v_t) \cap \mathcal{G}^{(t)}_j} w(v_t, u).
\end{equation}
We select the top-$K$ subgraph index set $\mathcal{J}^*$ and concatenate the corresponding knowledge:
\begin{equation}
\mathcal{K}_{\mathrm{col}}(v_t) = \bigcup_{j \in \mathcal{J}^*} \mathcal{K}^{(t,*)}_j.
\end{equation}
Building upon $\mathcal{C}_{v_t}^{(\mathrm{ref})}$, we inject $\mathcal{K}_{\mathrm{col}}(v_t)$ as additional context into the reproduction agent to further improve reproduction performance, yielding the final implementation $\mathcal{C}_{v_t}^{(*)}$.

\subsubsection{Full Pipeline}
\label{subsubsec:full-pipeline}

Given a target paper $v_t$, Node-Level Relation-Aware Aggregation (Section~\ref{subsubsec:node-level}) aggregates neighbor implementations into $\mathcal{C}_{v_t}^{(\mathrm{init})}$. Execution-Feedback Refinement (Section~\ref{subsubsec:exec-feedback}) then iteratively refines it into $\mathcal{C}_{v_t}^{(\mathrm{ref})}$. Meanwhile, Graph-Level Knowledge Induction (Section~\ref{subsubsec:graph-level}) partitions the paper graph into subgraphs and induces $\mathcal{K}^{(t,*)}_j$ within each. Finally, the retrieved $\mathcal{K}_{\mathrm{col}}(v_t)$ is injected into the reproduction agent to further improve $\mathcal{C}_{v_t}^{(\mathrm{ref})}$, producing the final implementation $\mathcal{C}_{v_t}^{(*)}$.

\section{Experiments}

\subsection{Experimental Setup}
\label{subsec:exp_setup}

\begin{table*}[t]
    \centering
    \caption{Performance Gap ($\downarrow$) across three domains and ten task types. Lower values indicate better reproduction quality. Results are averaged over 5 independent runs; we report mean $\pm$ std.}
    \label{tab:cross_domain_all}
    \small
    \setlength{\tabcolsep}{3.5pt}
    \renewcommand{\arraystretch}{1.20}
    
    \begin{tabular}{llccccccc}
    \toprule
    \multicolumn{1}{c}{\textbf{Domain}} & \textbf{Task} & \textbf{ReAct} & \textbf{OpenHands} & \textbf{Paper2Code} & \textbf{DeepCode} & \textbf{AutoReproduce} & \textbf{Ours} \\
    \midrule
    
    \cellcolor{DomRec} & \cellcolor{DomRec}MMRec & \cellcolor{DomRec}81.66{\scriptsize$\pm$12.3} & \cellcolor{DomRec}67.44{\scriptsize$\pm$14.7} & \cellcolor{DomRec}70.13{\scriptsize$\pm$15.2} & \cellcolor{DomRec}64.58{\scriptsize$\pm$13.4} & \cellcolor{DomRec}36.42{\scriptsize$\pm$8.7} & \cellcolor{DomRec}\best{20.17}{\scriptsize$\pm$4.6} \\
    \cellcolor{DomRec} & \cellcolor{DomRec}GeneralRec & \cellcolor{DomRec}77.85{\scriptsize$\pm$14.1} & \cellcolor{DomRec}58.77{\scriptsize$\pm$11.6} & \cellcolor{DomRec}32.11{\scriptsize$\pm$12.4} & \cellcolor{DomRec}28.66{\scriptsize$\pm$10.8} & \cellcolor{DomRec}28.34{\scriptsize$\pm$7.2} & \cellcolor{DomRec}\best{8.77}{\scriptsize$\pm$2.4} \\
    \cellcolor{DomRec}\multirow{-3}{*}{\parbox{2cm}{\centering\textbf{RecSys}}} & \cellcolor{DomRec}SeqRec & \cellcolor{DomRec}80.33{\scriptsize$\pm$13.5} & \cellcolor{DomRec}71.09{\scriptsize$\pm$16.2} & \cellcolor{DomRec}65.88{\scriptsize$\pm$14.8} & \cellcolor{DomRec}61.24{\scriptsize$\pm$12.1} & \cellcolor{DomRec}34.76{\scriptsize$\pm$9.1} & \cellcolor{DomRec}\best{3.38}{\scriptsize$\pm$1.9} \\
    
    \midrule
    
    \cellcolor{DomTS} & \cellcolor{DomTS}Classification & \cellcolor{DomTS}78.57{\scriptsize$\pm$15.4} & \cellcolor{DomTS}62.18{\scriptsize$\pm$12.9} & \cellcolor{DomTS}68.44{\scriptsize$\pm$16.1} & \cellcolor{DomTS}63.11{\scriptsize$\pm$13.7} & \cellcolor{DomTS}38.23{\scriptsize$\pm$8.4} & \cellcolor{DomTS}\best{5.03}{\scriptsize$\pm$2.6} \\
    \cellcolor{DomTS} & \cellcolor{DomTS}LongTerm & \cellcolor{DomTS}82.11{\scriptsize$\pm$11.8} & \cellcolor{DomTS}75.33{\scriptsize$\pm$14.3} & \cellcolor{DomTS}37.66{\scriptsize$\pm$10.5} & \cellcolor{DomTS}34.22{\scriptsize$\pm$9.2} & \cellcolor{DomTS}31.87{\scriptsize$\pm$7.6} & \cellcolor{DomTS}\best{3.27}{\scriptsize$\pm$1.7} \\
    \cellcolor{DomTS} & \cellcolor{DomTS}ShortTerm & \cellcolor{DomTS}76.88{\scriptsize$\pm$16.7} & \cellcolor{DomTS}55.44{\scriptsize$\pm$13.2} & \cellcolor{DomTS}64.79{\scriptsize$\pm$14.6} & \cellcolor{DomTS}60.13{\scriptsize$\pm$12.4} & \cellcolor{DomTS}35.67{\scriptsize$\pm$8.9} & \cellcolor{DomTS}\best{6.01}{\scriptsize$\pm$3.5} \\
    \cellcolor{DomTS}\multirow{-4}{*}{\parbox{2cm}{\centering\textbf{Time Series}}} & \cellcolor{DomTS}AnomalyDetection & \cellcolor{DomTS}80.66{\scriptsize$\pm$13.6} & \cellcolor{DomTS}73.55{\scriptsize$\pm$15.1} & \cellcolor{DomTS}67.22{\scriptsize$\pm$13.9} & \cellcolor{DomTS}62.88{\scriptsize$\pm$11.3} & \cellcolor{DomTS}40.12{\scriptsize$\pm$9.3} & \cellcolor{DomTS}\best{25.44}{\scriptsize$\pm$5.2} \\
    
    \midrule
    
    \cellcolor{DomGML} & \cellcolor{DomGML}GeneralGL & \cellcolor{DomGML}79.44{\scriptsize$\pm$14.2} & \cellcolor{DomGML}60.88{\scriptsize$\pm$11.7} & \cellcolor{DomGML}66.55{\scriptsize$\pm$15.3} & \cellcolor{DomGML}61.77{\scriptsize$\pm$12.6} & \cellcolor{DomGML}33.45{\scriptsize$\pm$8.2} & \cellcolor{DomGML}\best{4.66}{\scriptsize$\pm$2.1} \\
    \cellcolor{DomGML} & \cellcolor{DomGML}GSL & \cellcolor{DomGML}81.22{\scriptsize$\pm$11.4} & \cellcolor{DomGML}78.66{\scriptsize$\pm$17.5} & \cellcolor{DomGML}70.33{\scriptsize$\pm$13.8} & \cellcolor{DomGML}65.11{\scriptsize$\pm$11.9} & \cellcolor{DomGML}39.78{\scriptsize$\pm$9.5} & \cellcolor{DomGML}\best{17.86}{\scriptsize$\pm$4.9} \\
    \cellcolor{DomGML}\multirow{-3}{*}{\parbox{2cm}{\centering\textbf{Graph Learning}}} & \cellcolor{DomGML}NoisyGL & \cellcolor{DomGML}77.11{\scriptsize$\pm$15.6} & \cellcolor{DomGML}56.79{\scriptsize$\pm$10.4} & \cellcolor{DomGML}35.44{\scriptsize$\pm$11.2} & \cellcolor{DomGML}31.08{\scriptsize$\pm$9.7} & \cellcolor{DomGML}28.56{\scriptsize$\pm$7.8} & \cellcolor{DomGML}\best{5.79}{\scriptsize$\pm$2.9} \\
    
    \bottomrule
    \end{tabular}
    \end{table*}

\paragraph{\textbf{Benchmark.}}
We extend ReproduceBench~\citep{zhao2025autoreproduceautomaticaiexperiment}, which comprises only 13 papers with limited domain coverage and potential data contamination risks due to early publication dates. Our benchmark encompasses three research domains: \emph{recommendation systems}, \emph{time series analysis}, and \emph{graph learning}, spanning 10 tasks. Papers are stratified into training (191), validation (30), and test (40) sets by paper release date, with the test set exclusively containing papers released after 2025 to mitigate contamination. Each paper is evaluated on two representative datasets to ensure robustness. Further details are provided in Appendix~\ref{sec:benchmark}.

\paragraph{\textbf{Evaluation Metrics.}}
We adopt \emph{Performance Gap} from ReproduceBench~\citep{zhao2025autoreproduceautomaticaiexperiment} as our core metric:
\begin{equation}
\text{Performance Gap} = \frac{1}{n} \sum_{i=1}^{n} \frac{|\mathcal{P}(\mathcal{C}_{v_t})_i - \mathcal{P}(\hat{\mathcal{C}}_{v_t})_i|}{\max\bigl(\mathcal{P}(\mathcal{C}_{v_t})_i,\; \mathcal{P}(\hat{\mathcal{C}}_{v_t})_i\bigr)} \times 100\%
\end{equation}
where $\mathcal{P}(\mathcal{C}_{v_t})$ and $\mathcal{P}(\hat{\mathcal{C}}_{v_t})$ denote the execution performance of the official implementation and the agent-generated implementation, respectively. Lower values indicate better reproduction. For non-executable code, $\mathcal{P}(\hat{\mathcal{C}}_{v_t})$ is set to 0. Additionally, 3 domain experts per task independently evaluate the generated code across three dimensions—\emph{method fidelity}, \emph{experimental configuration consistency}, and \emph{code completeness}~\citep{zhao2025autoreproduceautomaticaiexperiment}—each on a 1--10 scale, with inter-rater reliability reported via Fleiss' Kappa.

\paragraph{\textbf{Baselines.}}
We compare against two categories of representative agents: \textbf{(i)~General coding agents}, including \textbf{ReAct}~\citep{yao2023reactsynergizingreasoningacting} and \textbf{OpenHands}~\citep{wang2025openhandsopenplatformai}; \textbf{(ii)~Reproduction-specific agents}, including \textbf{Paper2Code}~\citep{seo2025paper2codeautomatingcodegeneration}, \textbf{DeepCode}~\citep{li2025deepcodeopenagenticcoding}, and \textbf{AutoReproduce}~\citep{zhao2025autoreproduceautomaticaiexperiment}. Among them, AutoReproduce leverages a paper-lineage mechanism to retrieve implementation snippets from citation neighbors.

\paragraph{\textbf{Implementation Details.}}
All methods share the same backbone (Claude Opus 4.5)~\cite{anthropic2025claude}, hardware (64$\times$ NVIDIA A800 80GB), and software stack (PyTorch 2.7~\cite{steiner2019pytorch}, CUDA 12.6). Results are reported as mean $\pm$ std over 5 independent runs.
For our method, the key hyperparameters are set as follows: in SSGP, the number of ensemble LLM reviewers $K{=}5$ and the uncertainty penalty $\lambda{=}0.5$; in node-level aggregation, the reuse bias $\beta{=}0.15$; in graph-level knowledge induction, the knowledge frequency threshold $\eta{=}\lfloor 0.5 \cdot |\mathcal{G}^{(t)}_j| \rfloor$ and the number of retrieved subgraphs top-$K{=}3$. Training of the graph-level knowledge base runs for 3 epochs on the training set, with the best checkpoint selected on the validation set. Prompt details are provided in Appendix~\ref{app:prompts}.

\subsection{Main Results and Ablation Study}

\paragraph{\textbf{Main Results.}}
Table~\ref{tab:cross_domain_all} presents the cross-domain performance gap comparison. Our method achieves an average performance gap of 10.04\%, substantially outperforming the strongest baseline AutoReproduce. \textbf{(1)~Practicality:} Existing methods achieve a performance gap below 30\% in only 3 out of 50 evaluation instances (5 baselines $\times$ 10 tasks), failing to meet practical research requirements; in contrast, our method attains gaps below 26\% on all tasks and below 10\% on 7 out of 10 tasks. \textbf{(2)~Robustness:} Our method exhibits stronger cross-task generalization capability and lower variance across multiple runs; even on challenging tasks such as time series anomaly detection, it maintains leading performance. 

\begin{table}[t]
  \centering
  \small
  \vspace{-2mm}
  \caption{Human evaluation results ($\uparrow$). Three domain experts independently assess each reproduction across three dimensions (1--10 scale). We report mean and Fleiss' $\kappa$.}
  \vspace{-2mm}
  \label{tab:human_eval}
  \setlength{\tabcolsep}{3.5pt}
  \renewcommand{\arraystretch}{1.10}
  \begin{tabular}{l|ccc|c|c}
  \toprule
  \rowcolor{tableheader}
  & \textbf{Method} & \textbf{Param.} & \textbf{Exp.} & \textbf{Avg.} & \textbf{$\kappa$} \\
  \midrule
  Paper2Code & 4.67 & 4.23 & 5.12 & 4.67 & 0.71 \\
  DeepCode & 5.12 & 4.89 & 5.56 & 5.19 & 0.68 \\
  AutoReproduce & 6.23 & 5.78 & 6.67 & 6.23 & 0.72 \\
  \midrule
  \rowcolor{rowhighlight}
  \textbf{Ours} & \best{8.45} & \best{8.12} & \best{8.89} & \best{8.49} & 0.76 \\
  \bottomrule
  \end{tabular}
  \vspace{-4mm}
\end{table}

Table~\ref{tab:human_eval} presents the human evaluation results. Our method achieves the highest scores across all three dimensions. The inter-rater agreement (Fleiss' $\kappa = 0.76$) indicates substantial consistency among expert evaluators. Notably, while AutoReproduce shows competitive execution-based metrics, the gap in human evaluation scores (6.23 vs.\ 8.49) reveals that our approach produces code that better aligns with the methodological intent of the original papers.

\paragraph{\textbf{Ablation Study.}}
Table~\ref{tab:ablation} reports the results of removing each core component individually. Removing execution-feedback refinement causes the largest degradation, as somatic tacit knowledge $\mathcal{K}_{\mathrm{som}}$ cannot be recovered without iterative runtime observation. SSGP removal ranks second, since all downstream reasoning modules depend on the high-quality pruned subgraphs it constructs.

\begin{table}[t!]
  \centering
  \small
  \caption{Ablation study on key components. Performance Gap ($\downarrow$).}
  \label{tab:ablation}
  \setlength{\tabcolsep}{4pt}
  \renewcommand{\arraystretch}{1.15}
  \begin{tabular}{cccc|cc}
  \toprule
  \rowcolor{tableheader}
  \textbf{SSGP} & \textbf{Node} & \textbf{Exec.} & \textbf{Graph} & \textbf{RecSys} & \textbf{TimeSeries} \\
  \midrule
  $\circ$ & \ding{52} & \ding{52} & \ding{52} & 44.33 & 47.66 \\
  \ding{52} & $\circ$ & \ding{52} & \ding{52} & 21.77 & 25.44 \\
  \ding{52} & \ding{52} & $\circ$ & \ding{52} & 49.55 & 52.33 \\
  \ding{52} & \ding{52} & \ding{52} & $\circ$ & 36.88 & 38.66 \\
  \midrule
  \rowcolor{rowhighlight}
  \ding{52} & \ding{52} & \ding{52} & \ding{52} & \best{10.77} & \best{9.94} \\
  \bottomrule
  \end{tabular}
\end{table}

\subsection{Effectiveness of Graph-Level Knowledge Induction.}

\paragraph{\textbf{\ding{192}~Generality and Transferability of  $\mathcal{K}_{\mathrm{col}}$.}} We train the knowledge base on two backbone LLMs of different scales (Claude-Opus-4.5 and Qwen3-Next-80B-A3B-Thinking) and transfer it to four target models. As shown in Figure~\ref{fig:transfer}, regardless of the training backbone, the knowledge base yields consistent gains across all targets (average 9.4\%--13.2\%). Notably, the knowledge base trained on the smaller model still brings substantial improvements when transferred to larger ones. These results suggest that the induced knowledge is model-agnostic---precisely the defining property of $\mathcal{K}_{\mathrm{col}}$---confirming its generality and transferability.

\paragraph{\textbf{\ding{193}~Effectiveness of Subgraph Partitioning.}} We compare our edge-weight-based subgraph partitioning against random partitioning. As shown in Figure~\ref{fig:training_curve}, our partitioning strategy significantly outperforms random partitioning in both convergence speed and final performance. This confirms that SSGP edge weights capture inter-paper paradigm similarity, enabling paradigm-consistent papers to be grouped into coherent subgraphs $\mathcal{G}^{(t)}_j$ for more effective induction of $\mathcal{K}_{\mathrm{col}}$.

\paragraph{\textbf{\ding{194}~Case Study.}} Figure~\ref{fig:case_study} traces the knowledge base evolution on the multimodal recommendation task: epoch~1 yields surface-level engineering rules (e.g., device consistency); epoch~2, more sophisticated training tricks (e.g., graph normalization, sampling); epoch~3, high-level methodological insights (e.g., simplified propagation design). This progressive evolution---from engineering details to methodological insights---closely aligns with the defining characteristic of $\mathcal{K}_{\mathrm{col}}$, confirming that graph-level knowledge induction progressively distills collective tacit knowledge.

\begin{figure}[t]
  \centering
  \includegraphics[width=1\linewidth]{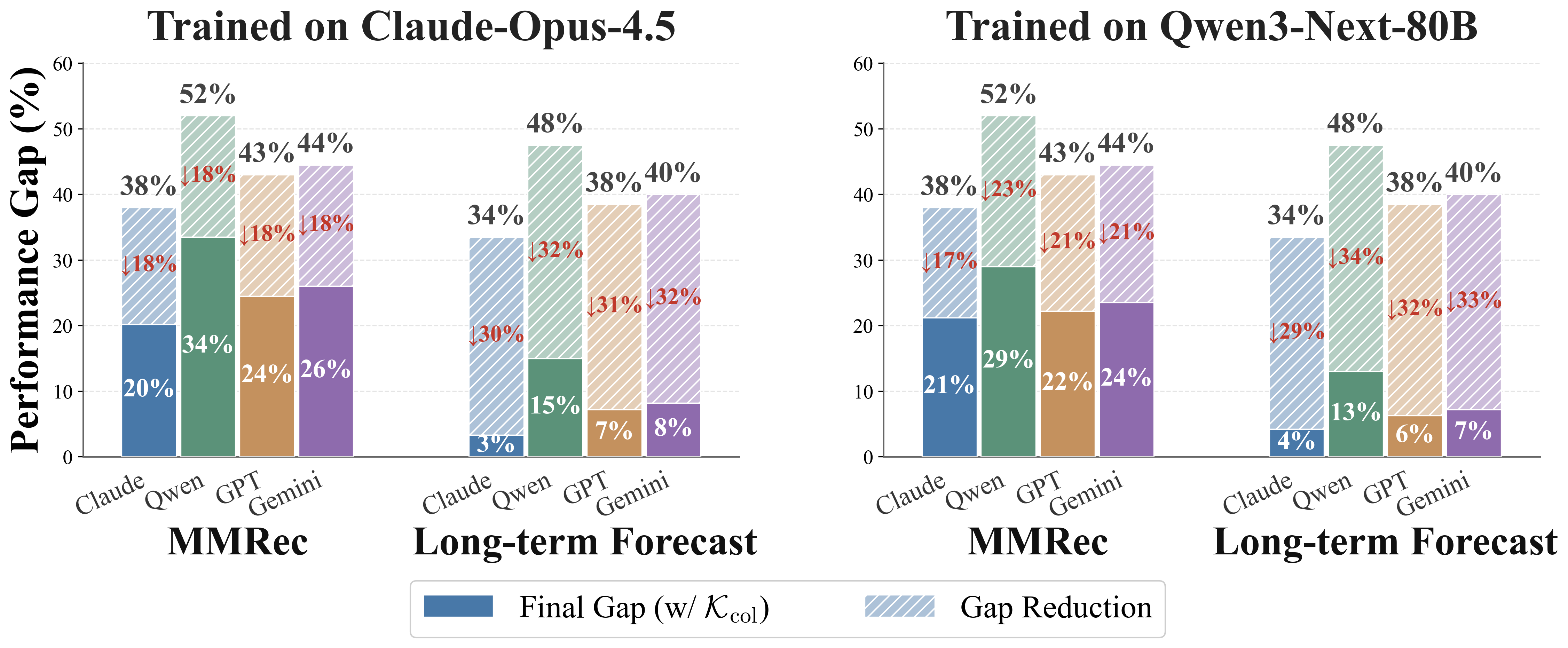}
  \caption{\textbf{Cross-model transferability of $\mathcal{K}_{\mathrm{col}}$.} The induced knowledge base yields consistent gains when transferred across different backbone LLMs.}
  \label{fig:transfer}
\end{figure}
\begin{figure}[t]
  \centering
  \includegraphics[width=1\linewidth]{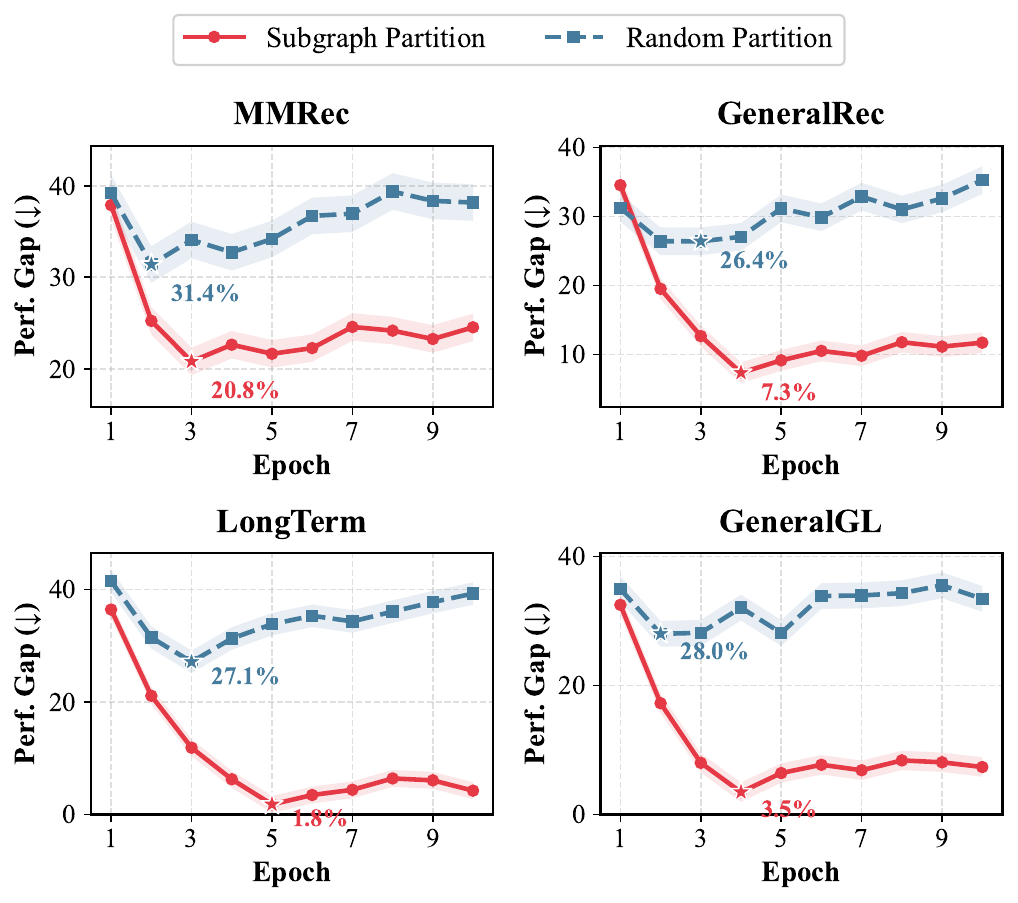}
  \caption{\textbf{Subgraph partitioning comparison.} Our edge-weight-based partitioning vs.\ random partitioning across training epochs.}
  \label{fig:training_curve}
\end{figure}

\begin{figure}[t]
  \centering
  \includegraphics[width=1\linewidth]{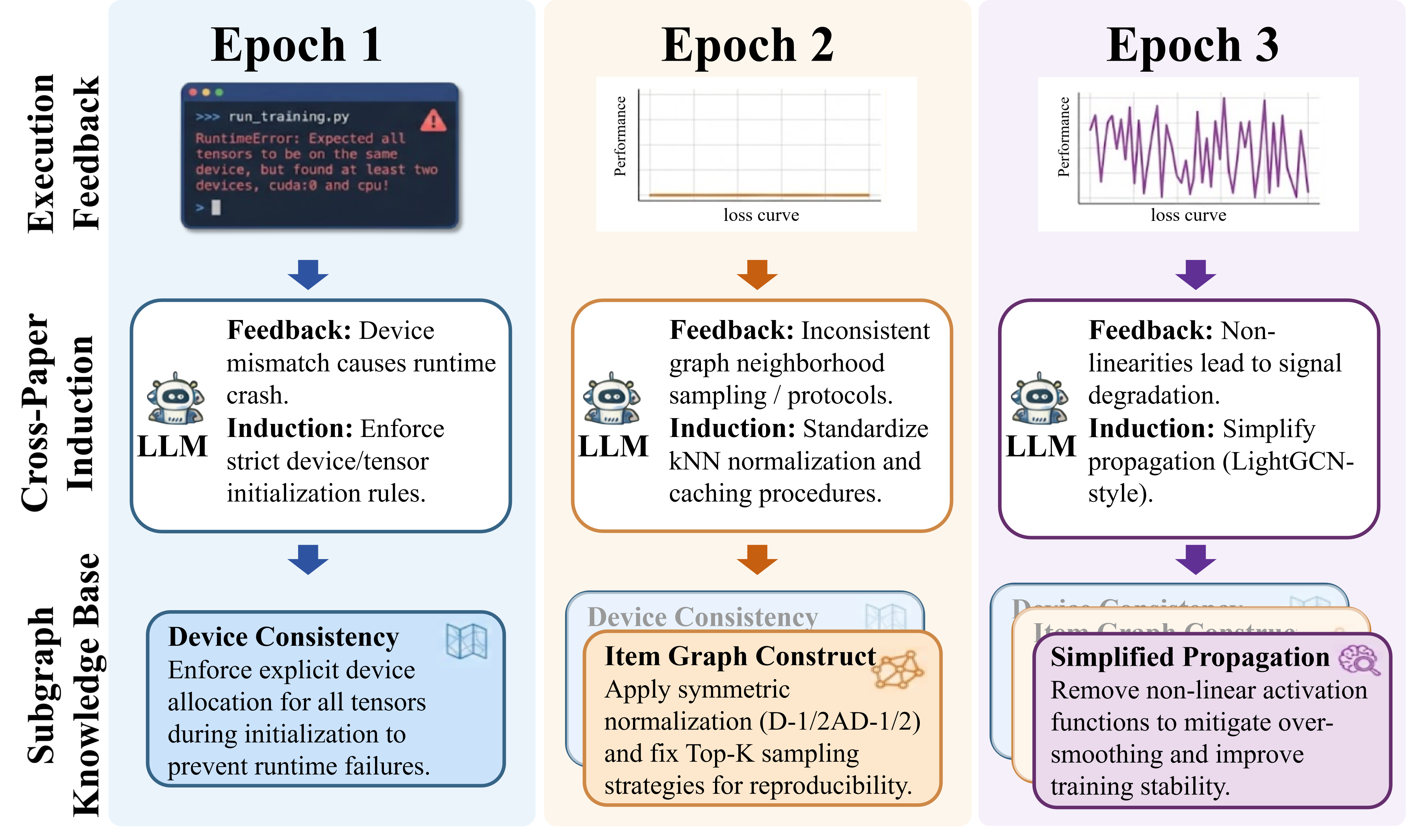}
  \caption{\textbf{Knowledge base evolution across training epochs.} Induced knowledge progresses from engineering rules to training tricks to methodological insights.}
  \label{fig:case_study}
\end{figure}

\vspace{-2mm}
\begin{figure}[t!]
\centering
\includegraphics[width=0.95\linewidth]{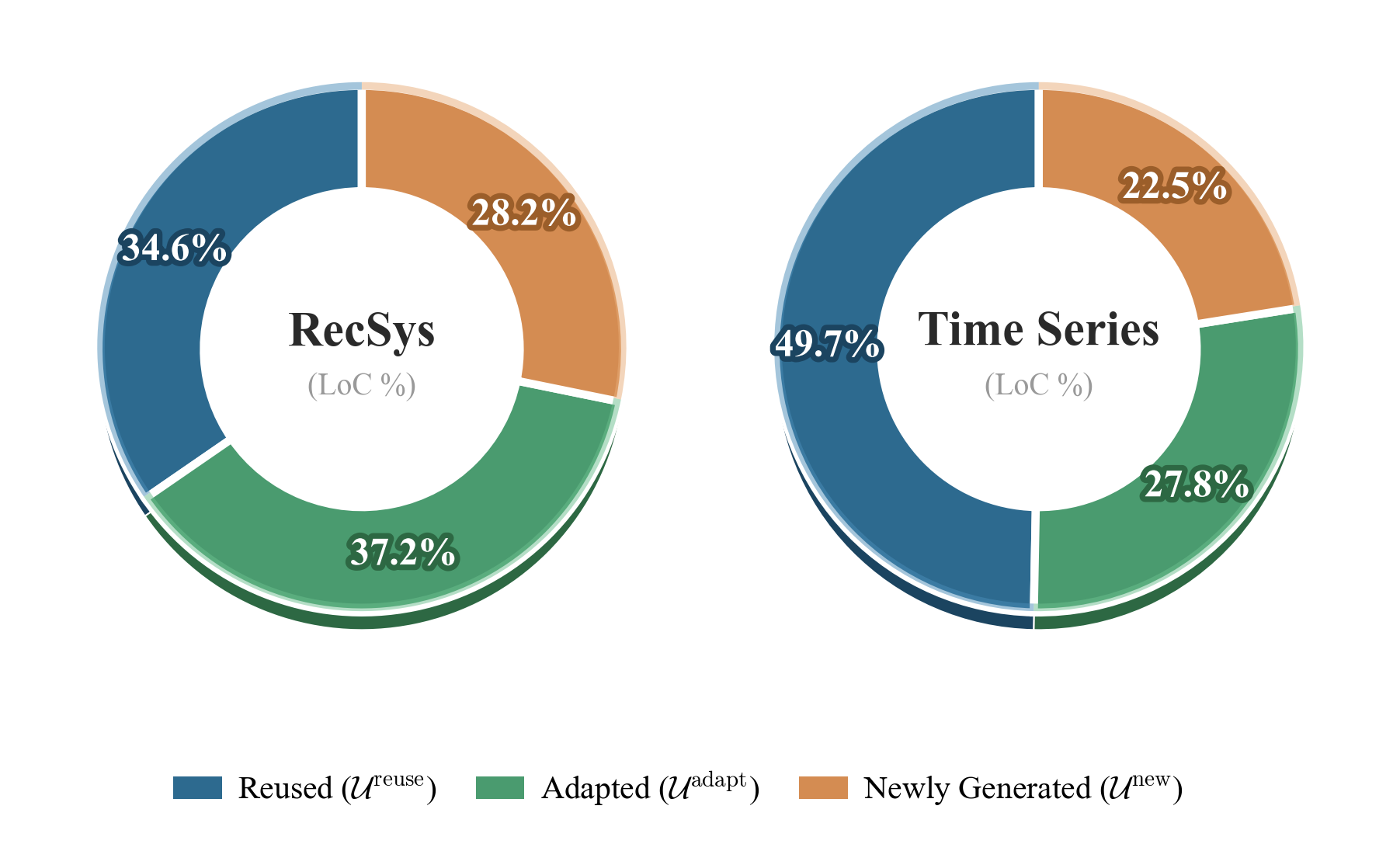}
\vspace{-2mm}
\caption{\textbf{Code-source breakdown.} Proportions of reused, adapted, and newly generated code across tasks.}
\vspace{-5mm}

\label{fig:code_source_analysis}
\end{figure}

\begin{figure}[t]
\centering
\begin{minipage}[t]{0.485\textwidth}
\begin{codeboxHL}{Reuse Candidate for Item Modality Graph Construction}
\begin{lstlisting}[style=paperpy]
# Reuse Candidate (Neighbor: FREEDOM)
def get_knn_adj_mat(self, mm_emb):
    ...
    mm_emb = F.normalize(mm_emb, dim=1)
    sim = mm_emb @ mm_emb.t()
    _, idx = torch.topk(sim, self.knn_k, dim=-1)
    ...
    adj = build_sparse_adj(idx, sim.size())
    adj = normalize_laplacian(adj)
    return adj
\end{lstlisting}
\end{codeboxHL}
\end{minipage}
\hfill
\vspace{0.5mm}
\begin{minipage}[t]{0.485\textwidth}
\begin{codeboxHL}{Adapt Candidate for Item Modality Graph Construction}
\begin{lstlisting}[style=paperdiff]
- def get_knn_adj_mat(self, img_emb, txt_emb):
-     ...
-     w = softmax(self.modal_weight)
-     sim = w[0]*sim(img_emb) + w[1]*sim(txt_emb)
-     return build_knn(sim, self.knn_k)

+ def get_knn_adj_mat(self, mm_emb):
+     ...
+     img_emb, txt_emb = mm_emb
+     w = softmax(self.modal_weight)          # keep LATTICE fusion
+     sim = w[0]*sim(img_emb) + w[1]*sim(txt_emb)
+     ...
+     adj = build_sparse_adj(topk(sim), sim.size())
+     adj = normalize_laplacian(adj)          # patched to static form
+     return adj
\end{lstlisting}
\end{codeboxHL}
\end{minipage}
\caption{\textbf{Neighborhood aggregation case study.} The priority score selects FREEDOM's reuse candidate over LATTICE's adapted variant for PGL's kNN graph construction.}
%\vspace{-15mm}

\label{fig:module_aggregation}
\end{figure}

\subsection{Effectiveness of Node-Level Relation-Aware Aggregation}

\paragraph{\textbf{\ding{192}~Effectiveness of Node-Level Relation Analysis.}}
As shown in Figure~\ref{fig:code_source_analysis}, We categorize the code in $\mathcal{C}_{v_t}^{(\mathrm{init})}$ by its origin: reused ($\mathcal{U}^{\mathrm{reuse}}$), adapted ($\mathcal{U}^{\mathrm{adapt}}$), and newly generated ($\mathcal{U}^{\mathrm{new}}$). As shown in Figure~\ref{fig:code_source_analysis}, in recommendation and time series tasks, reused code accounts for 34.6\% and 49.7\%, adapted code for 37.2\% and 27.8\%, while newly generated code constitutes only 28.2\% and 22.5\%. Over 70\% of the generated code is derived from neighbor implementations, confirming that node-level relation analysis accurately identifies reuse and adaptation relationships and substantially reduces code generated from scratch.

\paragraph{\textbf{\ding{193}~Effectiveness of Neighborhood Aggregation.}}
Figure~\ref{fig:module_aggregation} illustrates a representative case: for the item--item kNN graph construction in PGL, FREEDOM provides a reusable static implementation ($w{=}0.45$) and LATTICE an adapted dynamic variant ($w{=}0.37$). The priority score $p = w(v_t, v_i) + \beta \cdot \mathbb{1}[\text{reuse}]$ favors FREEDOM's static reuse, consistent with the official PGL implementation, validating the aggregation mechanism under multi-candidate competition.

%===========codebox==================

\section{Conclusion}
We present \method, a graph-based agent framework that formulates automated paper reproduction as the progressive recovery of tacit knowledge omitted in academic writing. By decomposing tacit knowledge into three types---relational, somatic, and collective---\method{} addresses each through a dedicated mechanism: node-level relation-aware aggregation from semantically pruned citation neighborhoods, execution-feedback iterative refinement, and subgraph level knowledge induction. Extensive experiments across three domains and ten tasks demonstrate that \method{} reduces the average performance gap to 10.04\%, improving over the strongest baseline by 24.68\%, with consistent gains across all domains.

\section{GenAI Disclosure}
Large language models (LLMs) are integral components of the proposed framework \method. Specifically, LLMs serve as ensemble rankers in SSGP for semantic relevance scoring, as relation analyzers in node-level aggregation for identifying reuse/adapt/new relationships, and as the reproduction agent for code generation and iterative refinement. In addition, generative AI tools were used to assist with language polishing of this manuscript. All scientific content, experimental design, and conclusions are solely the work of the authors.

%%
%% The next two lines define the bibliography style to be used, and
%% the bibliography file.
\bibliographystyle{ACM-Reference-Format}
\bibliography{sample-base}

%%
%% If your work has an appendix, this is the place to put it.
\newpage
\appendix
\onecolumn

\section{Impact Statement}
\label{app:impact}
This work advances automated paper reproduction by leveraging the relational structure inherent in scientific knowledge graphs. By enabling more accurate and efficient code generation from academic papers, we anticipate positive impacts on research reproducibility, accelerating the validation of published results and lowering barriers for researchers to build upon existing work. We acknowledge that automated reproduction tools could potentially be misused; however, we believe the benefits to scientific progress substantially outweigh such risks.

\section{Theorems and Proofs}
\label{app:proof}
\begin{proposition}
\label{prop:risk_averse_pruning_cantelli_group}
Fix a target paper $v_t$. For any candidate neighbor $u$, let $R(u)\in\mathbb R$
be its (random) ensemble rank (smaller is better), with
$\mu(u):=\mathbb E[R(u)]$ and $\sigma^2(u):=\mathrm{Var}(R(u))<\infty$.
For $\lambda>0$, define $s_\lambda(u):=\mu(u)+\lambda\sigma(u)$.

\textnormal{(i) Single-neighbor control.}\;
\[
\mathbb P\!\left(R(u)\ge s_\lambda(u)\right)\le \frac{1}{1+\lambda^2},
\qquad
\mathbb P\!\left(R(u)\le s_\lambda(u)\right)\ge \frac{\lambda^2}{1+\lambda^2}.
\]
% In particular, for any $\delta\in(0,1)$, choosing
% $\lambda=\sqrt{(1-\delta)/\delta}$ yields
% $\mathbb P\!\left(R(u)\le s_\lambda(u)\right)\ge 1-\delta$.

\textnormal{(ii) Group-level control (no independence needed).}\;
For any pruned set $S=\{u_1,\dots,u_K\}$,
\[
\mathbb P\!\left(\exists u\in S:\ R(u)\ge s_\lambda(u)\right)\le \frac{K}{1+\lambda^2},
\qquad
\mathbb P\!\left(\forall u\in S:\ R(u)\le s_\lambda(u)\right)\ge 1-\frac{K}{1+\lambda^2}.
\]
% In particular, if $\lambda\ge \sqrt{\frac{K}{\delta}-1}$, then
% $\mathbb P\!\left(\forall u\in S:\ R(u)\le s_\lambda(u)\right)\ge 1-\delta$.
\end{proposition}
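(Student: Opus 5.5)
The plan is to reduce everything to the one-sided Chebyshev (Cantelli) inequality and then take a union bound; no independence or distributional assumptions on the ensemble rank are needed beyond finite variance. For part (i), fix $u$ and write $X := R(u)-\mu(u)$, which has mean zero and variance $\sigma^2(u)$. Cantelli's inequality states that for every $t>0$,
\[
\mathbb P(X\ge t)\le \frac{\sigma^2(u)}{\sigma^2(u)+t^2}.
\]

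I would include its short proof for self-containment. For any $a\ge 0$, the event $\{X\ge t\}$ is contained in $\{(X+a)^2\ge (t+a)^2\}$. Markov's inequality then gives the bound $(\sigma^2+a^2)/(t+a)^2$. Minimizing over $a$ at $a=\sigma^2/t$ yields the stated bound.

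Applying this with $t=\lambda\sigma(u)$ gives
\[
\mathbb P\bigl(R(u)\ge s_\lambda(u)\bigr)\le \frac{\sigma^2}{\sigma^2+\lambda^2\sigma^2}=\frac{1}{1+\lambda^2}.
\]
For the second inequality I use $\{R(u)\le s_\lambda(u)\}\supseteq\{R(u)<s_\lambda(u)\}$. The latter is the complement of $\{R(u)\ge s_\lambda(u)\}$, so its probability is at least $1-\frac{1}{1+\lambda^2}=\frac{\lambda^2}{1+\lambda^2}$.

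The one genuine obstacle is the degenerate case $\sigma(u)=0$, where $t=\lambda\sigma(u)=0$ and Cantelli's inequality does not apply. In that case $R(u)=\mu(u)=s_\lambda(u)$ almost surely. The lower bound $\mathbb P(R(u)\le s_\lambda(u))=1$ still holds trivially, but $\mathbb P(R(u)\ge s_\lambda(u))=1$, so the first inequality of (i) fails as literally written. I would therefore make explicit the standing assumption $\sigma(u)>0$ for the upper-tail bound, which is natural for a genuinely random ensemble rank. Alternatively, one can state the exceedance event with a strict inequality $R(u)>s_\lambda(u)$. With the strict form, Cantelli's inequality still gives $\le 1/(1+\lambda^2)$ when $\sigma>0$, and the probability is $0$ when $\sigma=0$, so the bound holds without any caveat. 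I will prove the strict version alongside the stated one, since the strict form is all that the lower bounds require.

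For part (ii), I apply the union bound over the $K$ members of $S$:
\[
\mathbb P\bigl(\exists u\in S: R(u)\ge s_\lambda(u)\bigr)\le\sum_{k=1}^K \mathbb P\bigl(R(u_k)\ge s_\lambda(u_k)\bigr)\le \frac{K}{1+\lambda^2}.
\]
This uses no joint structure among the ranks, which justifies the phrase ``no independence needed''. For the simultaneous lower bound, I write
\[
\{\forall u\in S: R(u)\le s_\lambda(u)\}^c=\{\exists u\in S: R(u)>s_\lambda(u)\},
\]
bound the probability of the right-hand side by the union bound using the strict-tail version of (i), and take complements to obtain $1-\frac{K}{1+\lambda^2}$. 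Routing the argument through the strict version means the lower bound holds even when some $u\in S$ has $\sigma(u)=0$. Finally, I would note that the bound is informative only when $\lambda^2>K-1$, and that taking $\lambda\ge\sqrt{K/\delta-1}$ gives simultaneous coverage with probability at least $1-\delta$.
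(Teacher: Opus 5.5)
Your proposal is correct and follows the paper's proof: Cantelli's inequality with threshold $\lambda\sigma(u)$ gives (i), and a union bound over $S$ gives (ii). Your degenerate-case remark goes beyond the paper and is right. The paper invokes Cantelli with $a=\lambda\sigma(u)$, which needs $a>0$, so its proof silently assumes $\sigma(u)>0$. When $\sigma(u)=0$ the first bounds in (i) and (ii) genuinely fail, since $\mathbb P(R(u)\ge s_\lambda(u))=1$, whereas your strict-inequality route keeps both lower bounds valid without that assumption.
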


\begin{proof}
Cantelli's inequality gives, for any $a>0$,
$\mathbb P(R(u)-\mu(u)\ge a)\le \sigma^2(u)/(\sigma^2(u)+a^2)$.
Setting $a=\lambda\sigma(u)$ yields (i).
For (ii), apply (i) to each $u\in S$ and take a union bound over the events
$\{R(u)\ge s_\lambda(u)\}$.
\end{proof}

Proposition~\ref{prop:risk_averse_pruning_cantelli_group} provides \emph{distribution-free} guarantees for the composite score $s_\lambda(u)=\mu(u)+\lambda\sigma(u)$ used in SSGP (Section~\ref{sec:semantic_pruning}): with only finite-variance assumptions, it bounds the probability that any retained neighbor is badly mis-ranked. Under a sub-Gaussian noise model motivated by ensemble LLM ranking, Theorem~\ref{thm:subgaussian_pruning} sharpens this to an exponential bound with $\lambda=O(\sqrt{\log K})$.

\begin{assumption}[Sub-Gaussian ranking noise]
\label{assump:subgaussian}
There exists $c\ge 1$ such that for all $u\in S$ and all $t\ge 0$,
\[
\mathbb P\!\left(R(u)-\mu(u)\ge t\right)
\le \exp\!\left(-\frac{t^2}{2c^2\sigma^2(u)}\right).
\]
\end{assumption}

\begin{theorem}
\label{thm:subgaussian_pruning}
Under Assumption~\ref{assump:subgaussian}, for any pruned set $S$ with $|S|=K$ and any $\lambda>0$,
\[
\mathbb P\!\left(\forall u\in S:\ R(u)\le s_\lambda(u)\right)
\ge 1- K\exp\!\left(-\frac{\lambda^2}{2c^2}\right),
\qquad s_\lambda(u):=\mu(u)+\lambda\sigma(u).
\]
% In particular, for any $\delta\in(0,1)$, choosing $\lambda\ge c\sqrt{2\log(K/\delta)}$ gives
% $\mathbb P\!\left(\forall u\in S:\ R(u)\le s_\lambda(u)\right)\ge 1-\delta$.
\end{theorem}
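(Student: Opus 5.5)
The argument mirrors part (ii) of Proposition~\ref{prop:risk_averse_pruning_cantelli_group}: obtain a per-neighbor tail bound, then take a union bound over $S$. The only change is that Cantelli's inequality is replaced by the sub-Gaussian tail of Assumption~\ref{assump:subgaussian}.

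\emph{Step 1: single-neighbor tail.} Fix $u\in S$. If $\sigma(u)>0$, apply Assumption~\ref{assump:subgaussian} with $t=\lambda\sigma(u)\ge 0$:
\[
\mathbb P\bigl(R(u)\ge s_\lambda(u)\bigr)=\mathbb P\bigl(R(u)-\mu(u)\ge \lambda\sigma(u)\bigr)\le \exp\!\Bigl(-\frac{\lambda^2\sigma^2(u)}{2c^2\sigma^2(u)}\Bigr)=\exp\!\Bigl(-\frac{\lambda^2}{2c^2}\Bigr).
\]

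\emph{Step 2: the degenerate case $\sigma(u)=0$.} Here $R(u)=\mu(u)$ almost surely, so $R(u)\le s_\lambda(u)=\mu(u)$ holds with probability one. In the target event the inequality is non-strict, so this neighbor contributes nothing to the failure probability. To keep a single formula, I bound the complement event $\{R(u)>s_\lambda(u)\}$, which is contained in $\{R(u)\ge s_\lambda(u)\}$. It therefore inherits the bound of Step 1 when $\sigma(u)>0$, and has probability $0$ when $\sigma(u)=0$.

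\emph{Step 3: union bound.} The complement of $\{\forall u\in S:\ R(u)\le s_\lambda(u)\}$ is $\bigcup_{u\in S}\{R(u)>s_\lambda(u)\}$. By subadditivity, and with no independence among the ranks needed,
\[
\mathbb P\bigl(\exists u\in S:\ R(u)>s_\lambda(u)\bigr)\le \sum_{u\in S}\exp\!\Bigl(-\frac{\lambda^2}{2c^2}\Bigr)=K\exp\!\Bigl(-\frac{\lambda^2}{2c^2}\Bigr).
\]
Taking complements gives the claimed lower bound.

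There is no substantive obstacle; the only point needing care is the degenerate-variance case in Step 2. As a corollary, choosing $\lambda\ge c\sqrt{2\log(K/\delta)}$ makes the right-hand side at least $1-\delta$. This yields the $\lambda=O(\sqrt{\log K})$ scaling mentioned in the text.
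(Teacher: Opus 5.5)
Your proof is correct and follows the paper's route: apply the sub-Gaussian tail with $t=\lambda\sigma(u)$ to get the per-neighbor bound $\exp(-\lambda^2/(2c^2))$, then take a union bound over the $K$ neighbors, with no independence needed. Your separate treatment of $\sigma(u)=0$, bounding the strict-inequality complement $\{R(u)>s_\lambda(u)\}$, is a small but real refinement: the paper's per-neighbor step cancels $\sigma^2(u)$ and so implicitly assumes $\sigma(u)>0$, and when $\sigma(u)=0$ the event $\{R(u)\ge s_\lambda(u)\}$ actually has probability one.
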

\begin{proof}
For any $u\in S$ and $\lambda>0$, Assumption~\ref{assump:subgaussian} gives
\[
\mathbb P\!\left(R(u)\ge s_\lambda(u)\right)
=\mathbb P\!\left(R(u)-\mu(u)\ge \lambda\sigma(u)\right)
\le \exp\!\left(-\frac{\lambda^2}{2c^2}\right).
\]
Thus, letting $A_u:=\{R(u)\ge s_\lambda(u)\}$ and applying the union bound,
\[
\mathbb P\!\left(\exists u\in S:\ R(u)\ge s_\lambda(u)\right)
\le \sum_{u\in S}\mathbb P(A_u)
\le K\exp\!\left(-\frac{\lambda^2}{2c^2}\right),
\]
so
\[
\mathbb P\!\left(\forall u\in S:\ R(u)\le s_\lambda(u)\right)
\ge 1-K\exp\!\left(-\frac{\lambda^2}{2c^2}\right).
\]
% If $\lambda\ge c\sqrt{2\log(K/\delta)}$, then
% $K\exp(-\lambda^2/(2c^2))\le K\exp(-\log(K/\delta))=\delta$, proving the claim.
\end{proof}

\section{More Experimental Details}
\label{app:more_exp}

\subsection{Effectiveness of SSGP}
\label{app:ssgp}

\begin{figure}[t]
    \centering
    \includegraphics[width=1\linewidth]{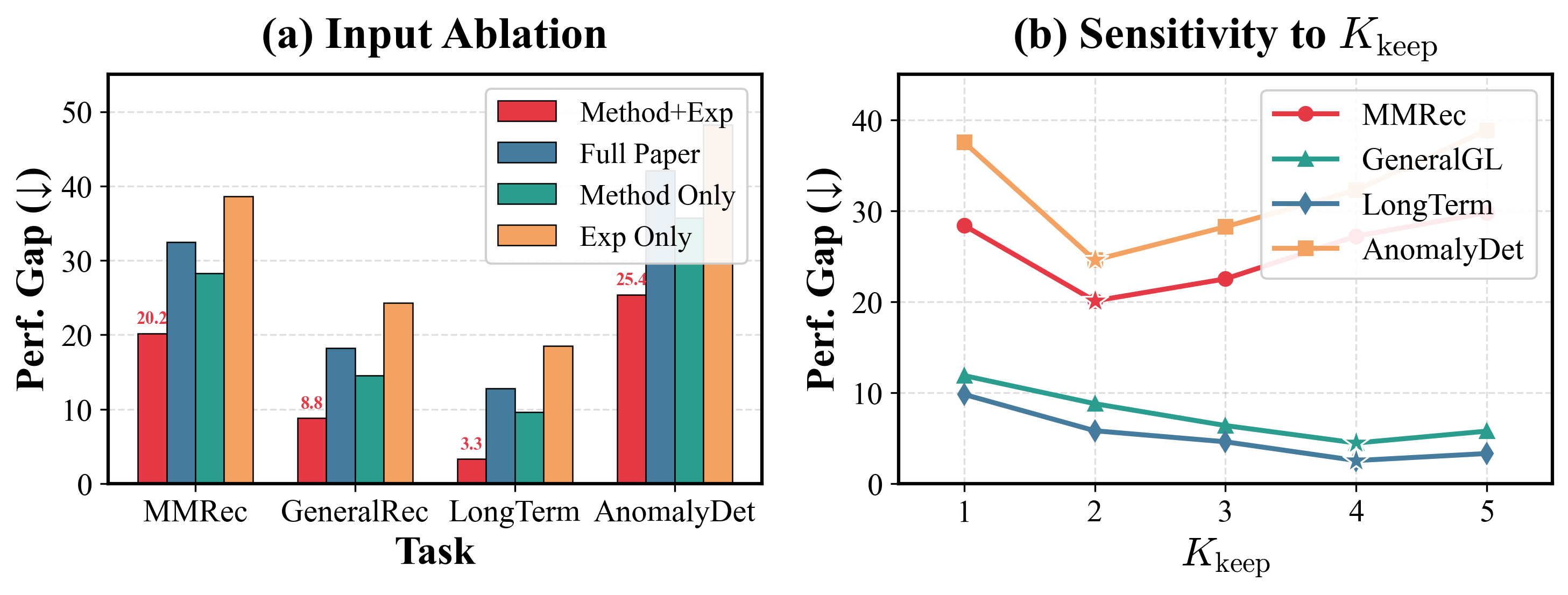}
    \caption{\textbf{SSGP analysis.} (a) Input ablation: Method+Experiments achieves optimal performance across all tasks. (b) Sensitivity to $K_{\text{keep}}$: difficult tasks require smaller $K$ for precise pruning.}
    \label{fig:ssgp_analysis}
\end{figure}

\begin{figure}[t]
  \centering
  \includegraphics[width=1\linewidth]{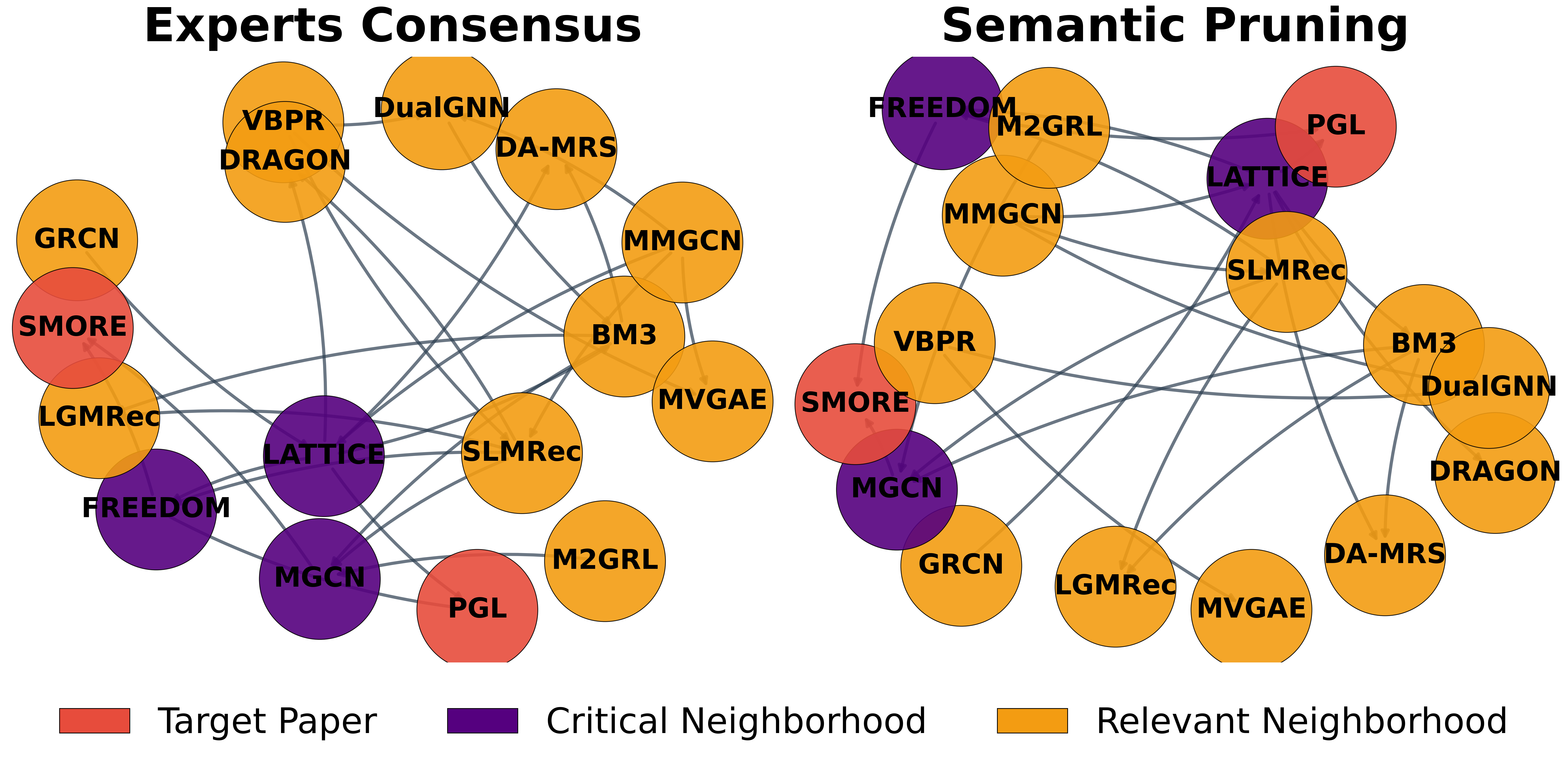}
  \caption{\textbf{Expert annotation validation.} Comparison between expert consensus (left) and our semantic pruning (right). Red: target paper; Purple: critical neighbors; Orange: relevant neighbors. Our method retains all expert-identified critical nodes.}
  \label{fig:pruning_validation}
\end{figure}

\ding{192}~\textbf{Expert Annotation Validation.} We design an expert annotation experiment by inviting six domain experts (three in recommendation systems and three in time series analysis) to independently annotate paper associations in the training sets of both domains. As illustrated in Figure~\ref{fig:pruning_validation}, our method achieves high consistency with expert consensus (Fleiss' $\kappa = 0.82$) across both domains. Notably, among neighbor papers annotated by experts as critical (purple nodes), our method completely retains these nodes. This demonstrates that semantic pruning can accurately identify knowledge sources that are truly important for reproduction tasks, effectively filtering out interference from noisy neighbors.

\ding{193}~\textbf{Input Ablation Study.} We compare four input configurations for semantic pruning: jointly inputting Method and Experiments sections, inputting the full paper, inputting only Method, and inputting only Experiments. As shown in Figure~\ref{fig:ssgp_analysis}(a), the Method+Experiments combination achieves optimal performance across all tasks. This indicates that the Method and Experiments sections contain the core implementation semantics required for reproduction, while other sections such as Introduction and Related Work may introduce noise that degrades pruning effectiveness.

\ding{194}~\textbf{Hyperparameter Sensitivity Analysis.} Figure~\ref{fig:ssgp_analysis}(b) indicates heterogeneous sensitivity to the neighbor-retention hyperparameter $K_{\text{keep}}$: tasks with larger performance gaps (MMRec, AnomalyDetection) attain their optimum at $K_{\text{keep}}=2$, reflecting the need for tighter pruning around core evidence, whereas tasks with smaller gaps (GeneralGL, LongTerm) peak at $K_{\text{keep}}=4$, showing tolerance for broader neighborhoods. This divergence underscores a trade-off between task difficulty and the degree of pruning required.

\section{Benchmark}
\label{sec:benchmark}

This section provides a detailed overview of the benchmark settings, including the specific papers and codebases used for training, validation, and testing.

\subsection{RecSys}
\label{sec:recsys}

Recommender systems are pivotal in information retrieval. We categorize the benchmark into three mainstream tasks: Multimodal Recommendation (leveraging side information), General Recommendation (classic collaborative filtering), and Sequential Recommendation (capturing dynamic user preferences).

\subsubsection{Multimodal Recommendation}
Multimodal recommendation enhances prediction accuracy by incorporating auxiliary data such as images, text, and videos. Table~\ref{tab:multimodal_papers} lists the baselines used, which utilize various fusion strategies to integrate visual and textual features with user-item interactions.

{
\small
\renewcommand{\arraystretch}{1.3}
\begin{xltabular}{\textwidth}{
    >{\raggedright\arraybackslash\hsize=1.6\hsize}X 
    >{\centering\arraybackslash\hsize=0.4\hsize}X 
}
    \caption{Selected Baselines for Multimodal Recommendation Benchmark.} \label{tab:multimodal_papers} \\
    
    \toprule
    \textbf{Paper Title} & \textbf{Date} \\ 
    \midrule
    \endfirsthead
    
    \multicolumn{2}{c}{{\bfseries \tablename\ \thetable{} -- continued from previous page}} \\
    \toprule
    \textbf{Paper Title} & \textbf{Date} \\ 
    \midrule
    \endhead
    
    \midrule
    \multicolumn{2}{r}{{Continued on next page}} \\
    \bottomrule
    \endfoot
    
    \bottomrule
    \endlastfoot

    % --- Training Set Sources ---
    \multicolumn{2}{l}{\textit{\textbf{Training Set Sources}}} \\ \midrule
    SelfCF: A Simple Framework for Self-supervised Collaborative Filtering & 2021.07 \\ 
    Layer-refined Graph Convolutional Networks for Recommendation & 2022.07 \\ 
    Visual Bayesian Personalized Ranking from Implicit Feedback & 2015.10 \\ 
    MMGCN: Multi-modal Graph Convolution Network for Personalized Recommendation of Micro-video & 2019.10 \\ 
    Are We Really Making Much Progress? A Worrying Analysis of Recent Neural Recommendation Approaches & 2019.07 \\
    Graph-Refined Convolutional Network for Multimedia Recommendation with Implicit Feedback & 2021.11 \\ 
    DualGNN: Dual Graph Neural Network for Multimedia Recommendation & 2021.12 \\ 
    Mining Latent Structures for Multimedia Recommendation & 2021.04 \\ 
    Self-Supervised Learning for Multimedia Recommendation & 2022.06 \\ 
    Bootstrap Latent Representations for Multi-modal Recommendation & 2023.04 \\ 
    A Tale of Two Graphs: Freezing and Denoising Graph Structures for Multimodal Recommendation & 2022.11 \\ 
    Multi-View Graph Convolutional Network for Multimedia Recommendation & 2023.08 \\ 
    Enhancing Dyadic Relations with Homogeneous Graphs for Multimodal Recommendation & 2023.01 \\ 

    % --- Validation Set Sources ---
    \midrule
    \multicolumn{2}{l}{\textit{\textbf{Validation Set Sources}}} \\ \midrule
    LGMRec: Local and Global Graph Learning for Multimodal Recommendation & 2023.12 \\ 
    Improving Multi-modal Recommender Systems by Denoising and Aligning Multi-modal Content and User Feedback & 2024.06 \\ 

    % --- Test Set Source ---
    \midrule
    \multicolumn{2}{l}{\textit{\textbf{Test Set Sources}}} \\ \midrule
    Harnessing Multimodal Large Language Models for Multimodal Sequential Recommendation & 2025.01 \\ 
    NoteLLM-2: Multimodal Large Representation Models for Recommendation & 2025.01 \\ 
    LEMUR: Large scale End-to-end MUltimodal Recommendation & 2025.11 \\ 
    Multimodal Recommendation System Based on Cross Self-Attention Fusion & 2025.01 \\ 
\end{xltabular}
}

\subsubsection{General Recommendation}
General recommendation, often referred to as Collaborative Filtering (CF), focuses on learning user and item representations solely from interaction history. Table~\ref{tab:cf_papers} summarizes the models evaluated, spanning from matrix factorization to modern graph contrastive learning approaches.

{
\small
\renewcommand{\arraystretch}{1.3}
\begin{xltabular}{\textwidth}{
    >{\raggedright\arraybackslash\hsize=1.6\hsize}X 
    >{\centering\arraybackslash\hsize=0.4\hsize}X 
}
    \caption{Selected Baselines for General Recommendation (Collaborative Filtering).} \label{tab:cf_papers} \\
    
    \toprule
    \textbf{Paper Title} & \textbf{Date} \\ 
    \midrule
    \endfirsthead
    
    \multicolumn{2}{c}{{\bfseries \tablename\ \thetable{} -- continued from previous page}} \\
    \toprule
    \textbf{Paper Title} & \textbf{Date} \\ 
    \midrule
    \endhead
    
    \midrule
    \multicolumn{2}{r}{{Continued on next page}} \\
    \bottomrule
    \endfoot
    
    \bottomrule
    \endlastfoot

    % --- Training Set Sources ---
    \multicolumn{2}{l}{\textit{\textbf{Training Set Sources}}} \\ \midrule
    ItemKNN: Item-based Collaborative Filtering Recommendation Algorithms & 2024.07 \\ 
    BPR: Bayesian Personalized Ranking from Implicit Feedback & 2012.05 \\ 
    ENMF: Efficient Neural Matrix Factorization without Sampling & 2020.01 \\ 
    AsymKNN: Factorization Meets the Neighborhood & 2008.08 \\ 
    SLIM: Sparse Linear Methods for Top-N Recommender Systems & 2011.12 \\ 
    FISM: Factored Item Similarity Models for Top-N Recommender Systems & 2013.08 \\ 
    CDAE: Collaborative Denoising Auto-Encoders for Top-N RecSys & 2016.02 \\ 
    LINE: Large-scale Information Network Embedding & 2015.03 \\ 
    DMF: Deep Matrix Factorization Models for Recommender Systems & 2017 \\ 
    NeuMF: Neural Collaborative Filtering & 2017.08 \\ 
    Variational Autoencoders for Collaborative Filtering & 2018.02 \\ 
    NAIS: Neural Attentive Item Similarity Model for Recommendation & 2018.09 \\ 
    GCMC: Graph Convolutional Matrix Completion & 2017.06 \\ 
    ConvNCF: Outer Product-based Neural Collaborative Filtering & 2018.08 \\ 
    SpectralCF: Spectral Collaborative Filtering & 2018.08 \\ 
    MacridVAE: Learning Disentangled Representations for Recommendation & 2019.10 \\ 
    RecVAE: A New Variational Autoencoder with Implicit Feedback & 2019.12 \\ 
    NGCF: Neural Graph Collaborative Filtering & 2019.05 \\ 
    EASE: Embarrassingly Shallow Autoencoders for Sparse Data & 2019.05 \\ 
    LightGCN: Simplifying and Powering Graph Convolution Network & 2020.02 \\ 
    DGCF: Disentangled Graph Collaborative Filtering & 2020.07 \\ 
    NCL: Improving Graph CF with Neighborhood-enriched Contrastive Learning & 2022.02 \\ 
    NNCF: A Neural Collaborative Filtering Model with Interaction-based Neighborhood & 2017.11 \\ 
    RaCT: Towards Amortized Ranking-Critical Training for CF & 2019.06 \\ 
    SimpleX: A Simple and Strong Baseline for Collaborative Filtering & 2021.09 \\ 

    % --- Validation Set Sources ---
    \midrule
    \multicolumn{2}{l}{\textit{\textbf{Validation Set Sources}}} \\ \midrule
    SGL: Self-supervised Graph Learning for Recommendation & 2020.10 \\ 
    NCE-PL: Noise-Contrastive Estimation with Pseudo-Labeling & 2023.05 \\ 
    LDiffRec: Simplifying and Powering Diffusion Model for Multimedia Rec & 2023.09 \\ 

    % --- Test Set Sources ---
    \midrule
    \multicolumn{2}{l}{\textit{\textbf{Test Set Sources}}} \\ \midrule
    NLGCL: Naturally Existing Neighbor Layers Graph Contrastive Learning & 2025.09 \\ 
    FlowCF: Flow-Based Contrastive Learning for Collaborative Filtering & 2025.09 \\ 
    CORONA: A Coarse-to-Fine Framework for Graph-based Recommendation with LLMs & 2025.06 \\ 
    EAGER-LLM: Enhancing LLMs as Recommenders through Behavior-Semantic Integration & 2025.02 \\ 
    Collaborative Diffusion Model for Recommender System & 2025.01 \\ 
    CUGF: A Reliable and Fair Recommendation Framework & 2025.04 \\
\end{xltabular}
}

\subsubsection{Sequence Recommendation}
Sequential recommendation aims to model the dynamic evolution of user preferences over time. Table~\ref{tab:seq_rec_papers} outlines the comprehensive list of baselines, ranging from Markov Chains and RNNs to recent Transformer-based and Diffusion-based architectures.

{
\small
\renewcommand{\arraystretch}{1.3}
\begin{xltabular}{\textwidth}{
    >{\raggedright\arraybackslash\hsize=1.6\hsize}X 
    >{\centering\arraybackslash\hsize=0.4\hsize}X 
}
    \caption{Selected Baselines for Sequential Recommendation Benchmark.} \label{tab:seq_rec_papers} \\
    
    \toprule
    \textbf{Paper Title} & \textbf{Date} \\ 
    \midrule
    \endfirsthead
    
    \multicolumn{2}{c}{{\bfseries \tablename\ \thetable{} -- continued from previous page}} \\
    \toprule
    \textbf{Paper Title} & \textbf{Date} \\ 
    \midrule
    \endhead
    
    \midrule
    \multicolumn{2}{r}{{Continued on next page}} \\
    \bottomrule
    \endfoot
    
    \bottomrule
    \endlastfoot

    % --- Training Set Sources ---
    \multicolumn{2}{l}{\textit{\textbf{Training Set Sources}}} \\ \midrule
    FPMC: Factorizing Personalized Markov Chains for Next-Basket Rec & 2010.04 \\ 
    GRU4Rec: Session-based Recommendations with Recurrent Neural Networks & 2015.11 \\ 
    TransRec: Translation-based Recommendation & 2017.07 \\ 
    DIN: Deep Interest Network for Click-Through Rate Prediction & 2017.06 \\ 
    STAMP: Short-Term Attention/Memory Priority Model & 2018.07 \\ 
    Caser: Personalized Top-N Sequential Rec via Convolutional Embedding & 2018.09 \\ 
    NARM: Neural Attentive Session-based Recommendation & 2017.11 \\ 
    NextItNet: A Simple Convolutional Generative Network & 2018.08 \\ 
    FOSSIL: Fusing Similarity Models with Markov Chains & 2016.09 \\ 
    SASRec: Self-Attentive Sequential Recommendation & 2018.08 \\ 
    DIEN: Deep Interest Evolution Network for CTR Prediction & 2018.09 \\ 
    RepeatNet: A Repeat Aware Neural Recommendation Machine & 2018.12 \\ 
    SRGNN: Session-based Recommendation with Graph Neural Networks & 2018.11 \\ 
    GRU4RecF: Parallel RNN Architectures for Feature-rich Recommendations & 2016.09 \\ 
    Latent Cross: Making Use of Context in Recurrent Recommender Systems & 2018.02 \\ 
    HGN: Hierarchical Gating Networks for Sequential Recommendation & 2019.06 \\ 
    BERT4Rec: Sequential Recommendation with BERT & 2019.04 \\ 
    FDSA: Feature-level Deeper Self-Attention Network & 2019.08 \\ 
    HRM: Learning Hierarchical Representation Model for Next Basket Rec & 2015.08 \\ 
    KSR: Improving Sequential Recommendation with Knowledge-Enhanced Memory & 2018.06 \\ 
    Lightweight Self-Attentive Sequential Recommendation & 2021.08 \\ 
    CORE: Simple and Effective Session-based Recommendation & 2022.04 \\ 
    S3-Rec: Self-Supervised Learning with Mutual Information Maximization & 2020.08 \\ 
    Time Lag Aware Sequential Recommendation & 2022.08 \\ 
    GCSAN: Graph Contextualized Self-Attention Network & 2019.08 \\ 
    NPE: Neural Personalized Embedding for Collaborative Filtering & 2018.05 \\ 
    DiffuRec: A Diffusion Model for Sequential Recommendation & 2023.10 \\ 

    % --- Validation Set Sources ---
    \midrule
    \multicolumn{2}{l}{\textit{\textbf{Validation Set Sources}}} \\ \midrule
    SINE: Sparse-Interest Network for Sequential Recommendation & 2021.02 \\ 
    FEARec: Frequency Enhanced Hybrid Attention Network & 2023.04 \\ 
    BASRec: Augmenting Sequential Rec with Balanced Relevance & 2024.12 \\ 

    % --- Test Set Sources ---
    \midrule
    \multicolumn{2}{l}{\textit{\textbf{Test Set Sources}}} \\ \midrule
    Not Just What, But When: Integrating Irregular Intervals to LLM & 2025.07 \\ 
    On-Device Large Language Models for Sequential Recommendation & 2026.01 \\ 
    BlossomRec: Block-level Fused Sparse Attention Mechanism for Sequential Recommendations & 2025.12 \\ 
    Dual Conditional Diffusion Models for Sequential Recommendation & 2025.03 \\ 
    Improving Sequential Recommenders through Counterfactual Augmentation of System Exposure & 2025.04 \\ 
\end{xltabular}
}

\subsection{Time Series Analysis Tasks}

\subsubsection{Anomaly Detection}
To rigorously evaluate the performance of anomaly detection models, we selected a diverse set of baselines ranging from foundational Transformer architectures to recent frequency-domain innovations. Table~\ref{tab:anomaly_papers} summarizes the sources and publication dates of these methods.

{
\small
\renewcommand{\arraystretch}{1.3}
\begin{xltabular}{\textwidth}{
    >{\raggedright\arraybackslash\hsize=0.8\hsize}X
    >{\raggedright\arraybackslash\hsize=1.1\hsize}X
    >{\raggedright\arraybackslash\hsize=1.1\hsize}X 
    c
}
    \caption{Summary of Baselines and SOTA Models for Time Series Anomaly Detection.} \label{tab:anomaly_papers} \\
    
    \toprule
    \textbf{Paper Title} & \textbf{arXiv} & \textbf{GitHub} & \textbf{Date} \\ 
    \midrule
    \endfirsthead
    
    \multicolumn{4}{c}%
    {{\bfseries \tablename\ \thetable{} -- continued from previous page}} \\
    \toprule
    \textbf{Paper Title} & \textbf{arXiv} & \textbf{GitHub} & \textbf{Date} \\ 
    \midrule
    \endhead
    
    \midrule
    \multicolumn{4}{r}{{Continued on next page}} \\
    \bottomrule
    \endfoot
    
    \bottomrule
    \endlastfoot

    % --- Training Set Sources ---
    \multicolumn{4}{l}{\textit{\textbf{Training Set Sources}}} \\ \midrule
    Attention Is All You Need & \url{https://arxiv.org/pdf/1706.03762} & \multicolumn{1}{c}{--} & 2017.06 \\ 
    Reformer: The Efficient Transformer & \url{https://arxiv.org/pdf/2001.04451} & \url{https://github.com/google/trax} & 2020.01 \\ 
    Informer: Beyond Efficient Transformer & \url{https://arxiv.org/abs/2012.07436} & \url{https://github.com/zhouhaoyi/Informer2020} & 2020.12 \\ 
    Autoformer: Decomposition Transformers & \url{https://arxiv.org/pdf/2106.13008} & \url{https://github.com/thuml/Autoformer} & 2021.06 \\ 
    Pyraformer: Low-Complexity Pyramidal Attention & \url{https://openreview.net/pdf?id=0EXmFzUn5I} & \url{https://github.com/ant-research/Pyraformer} & 2022.01 \\ 
    FEDformer: Frequency Enhanced Transformer & \url{https://arxiv.org/pdf/2201.12740} & \url{https://github.com/MAZiqing/FEDformer} & 2022.01 \\ 
    ETSformer: Exponential Smoothing Transformers & \url{https://arxiv.org/pdf/2202.01381} & \url{https://github.com/salesforce/ETSformer} & 2022.02 \\ 
    Are Transformers Effective for Forecasting? & \url{https://arxiv.org/pdf/2205.13504} & \url{https://github.com/cure-lab/LTSF-Linear} & 2022.05 \\ 
    FiLM: Frequency improved Legendre Memory & \url{https://arxiv.org/pdf/2205.08897} & \url{https://github.com/tianzhou2011/FiLM/} & 2022.05 \\ 
    LightTS: Lightweight Time Series Classification & \url{https://arxiv.org/pdf/2302.12721} & \url{https://github.com/d-gcc/LightTS} & 2023.02 \\ 
    TimesNet: Temporal 2D-Variation Modeling & \url{https://arxiv.org/pdf/2210.02186} & \url{https://github.com/thuml/TimesNet} & 2022.10 \\ 
    Crossformer: Transformer Utilizing Cross-Dim & \url{https://openreview.net/pdf?id=vSVLM2j9eie} & \url{https://github.com/Thinklab-SJTU/Crossformer} & 2023.02 \\ 
    MICN: Multi-scale Local and Global Context & \url{https://openreview.net/pdf?id=zt53IDUR1U} & \url{https://github.com/wanghq21/MICN} & 2023.02 \\ 
    iTransformer: Inverted Transformers & \url{https://arxiv.org/pdf/2310.06625} & \url{https://github.com/thuml/iTransformer} & 2023.10 \\ 
    
    % --- Validation Set Source ---
    \midrule
    \multicolumn{4}{l}{\textit{\textbf{Validation Set Sources}}} \\ \midrule
    KAN-AD: Kolmogorov-Arnold Networks & \url{https://arxiv.org/pdf/2411.00278} & \url{https://github.com/CSTCloudOps/KAN-AD} & 2024.11 \\
    
    % --- Test Set Source ---
    \midrule
    \multicolumn{4}{l}{\textit{\textbf{Test Set Sources}}} \\ \midrule
    TIMEMIXER++: Universal Predictive Analysis & \url{https://openreview.net/pdf?id=1CLzLXSFNn} & \url{https://github.com/kwuking/TimeMixer} & 2025.05 \\ 
    MtsCID: Capturing Coarse-Grained Dependencies & \url{https://arxiv.org/pdf/2501.16364} & \url{https://github.com/ilwoof/MtsCID/} & 2025.01 \\ 
    CATCH: Channel-Aware Multivariate Time Series Anomaly Detection via Frequency Patching & \url{https://openreview.net/pdf?id=m08aK3xxdJ} & \url{https://github.com/decisionintelligence/CATCH} & 2025.01 \\ 
    CrossAD: Time Series Anomaly Detection with Cross-scale Associations and Cross-window Modeling & \url{https://arxiv.org/pdf/2510.12489} & \url{https://github.com/decisionintelligence/CrossAD} & 2025.10 \\
\end{xltabular}
}

\subsubsection{Classification}
For the classification task, we have curated a list of models that represent significant milestones in the field. Table~\ref{tab:classification_papers} details the configuration of our classification benchmark, distinguishing between established training baselines and the validation/test sets derived from the latest research.

{
\small
\renewcommand{\arraystretch}{1.3}
\begin{xltabular}{\textwidth}{
    >{\raggedright\arraybackslash\hsize=0.8\hsize}X 
    >{\raggedright\arraybackslash\hsize=1.1\hsize}X 
    >{\raggedright\arraybackslash\hsize=1.1\hsize}X 
    c
}
    \caption{List of Selected Models for Time Series Classification Benchmark.} \label{tab:classification_papers} \\
    
    \toprule
    \textbf{Paper Title} & \textbf{arXiv} & \textbf{GitHub} & \textbf{Date} \\ 
    \midrule
    \endfirsthead
    
    \multicolumn{4}{c}%
    {{\bfseries \tablename\ \thetable{} -- continued from previous page}} \\
    \toprule
    \textbf{Paper Title} & \textbf{arXiv} & \textbf{GitHub} & \textbf{Date} \\ 
    \midrule
    \endhead
    
    \midrule
    \multicolumn{4}{r}{{Continued on next page}} \\
    \bottomrule
    \endfoot
    
    \bottomrule
    \endlastfoot

    % --- Training Set Sources ---
    \multicolumn{4}{l}{\textit{\textbf{Training Set Sources}}} \\ \midrule
    Attention Is All You Need & \url{https://arxiv.org/pdf/1706.03762} & \multicolumn{1}{c}{--} & 2017.06 \\ 
    Reformer: The Efficient Transformer & \url{https://arxiv.org/pdf/2001.04451} & \url{https://github.com/google/trax} & 2020.01 \\ 
    Informer: Beyond Efficient Transformer & \url{https://arxiv.org/pdf/2012.07436} & \url{https://github.com/zhouhaoyi/Informer2020} & 2020.12 \\ 
    Autoformer: Decomposition Transformers & \url{https://arxiv.org/pdf/2106.13008} & \url{https://github.com/thuml/Autoformer} & 2021.06 \\ 
    Pyraformer: Low-Complexity Pyramidal Attention & \url{https://openreview.net/pdf?id=0EXmFzUn5I} & \url{https://github.com/ant-research/Pyraformer} & 2022.01 \\ 
    FEDformer: Frequency Enhanced Transformer & \url{https://arxiv.org/pdf/2201.12740} & \url{https://github.com/MAZiqing/FEDformer} & 2022.01 \\ 
    ETSformer: Exponential Smoothing Transformers & \url{https://arxiv.org/pdf/2202.01381} & \url{https://github.com/salesforce/ETSformer} & 2022.02 \\ 
    Are Transformers Effective for Forecasting? & \url{https://arxiv.org/pdf/2205.13504} & \url{https://github.com/cure-lab/LTSF-Linear} & 2022.05 \\ 
    FiLM: Frequency improved Legendre Memory & \url{https://arxiv.org/pdf/2205.08897} & \url{https://github.com/tianzhou2011/FiLM/} & 2022.05 \\ 
    LightTS: Lightweight Time Series Classification & \url{https://arxiv.org/pdf/2302.12721} & \url{https://github.com/d-gcc/LightTS} & 2023.02 \\ 
    TimesNet: Temporal 2D-Variation Modeling & \url{https://arxiv.org/pdf/2210.02186} & \url{https://github.com/thuml/TimesNet} & 2022.10 \\ 
    PatchTST: A Time Series is Worth 64 Words & \url{https://arxiv.org/pdf/2211.14730} & \multicolumn{1}{c}{--} & 2022.11 \\ 
    Crossformer: Utilizing Cross-Dim Dependency & \url{https://openreview.net/pdf?id=vSVLM2j9eie} & \url{https://github.com/Thinklab-SJTU/Crossformer} & 2023.02 \\ 
    MICN: Multi-scale Local and Global Context & \url{https://openreview.net/pdf?id=zt53IDUR1U} & \url{https://github.com/wanghq21/MICN} & 2023.02 \\ 

    % --- Validation Set Source ---
    \midrule
    \multicolumn{4}{l}{\textit{\textbf{Validation Set Sources}}} \\ \midrule
    iTransformer: Inverted Transformers & \url{https://arxiv.org/pdf/2310.06625} & \url{https://github.com/thuml/iTransformer} & 2023.10 \\ 
    TimeMixer: Decomposable Multiscale Mixing & \url{https://arxiv.org/pdf/2405.14616} & \multicolumn{1}{c}{--} & 2024.05 \\ 

    % --- Test Set Source ---
    \midrule
    \multicolumn{4}{l}{\textit{\textbf{Test Set Sources}}} \\ \midrule
    TIMEMIXER++: Universal Predictive Analysis & \url{https://openreview.net/pdf?id=1CLzLXSFNn} & \url{https://github.com/kwuking/TimeMixer} & 2025.05 \\ 
    InterpretGatedNetwork: Shedding Light on TSC & \url{https://openreview.net/pdf?id=n34taxF0TC} & \url{https://github.com/YunshiWen/Ign} & 2025.01 \\ 
    SGN: Shifted Window-Based Variable Grouping & \url{https://neurips.cc/virtual/2025/poster/115248} & \url{https://github.com/colison/SGN} & 2025.12 \\ 
\end{xltabular}
}

\subsubsection{Imputation}
Imputation is a critical preprocessing step for real-world time series analysis. Table~\ref{tab:imputation_papers} lists the baselines evaluated for the imputation task, ranging from reconstruction-based methods to recent generative approaches.

{
\small
\renewcommand{\arraystretch}{1.3}
\begin{xltabular}{\textwidth}{
    >{\raggedright\arraybackslash\hsize=0.8\hsize}X 
    >{\raggedright\arraybackslash\hsize=1.1\hsize}X 
    >{\raggedright\arraybackslash\hsize=1.1\hsize}X 
    c
}
    \caption{Summary of Baselines and SOTA Models for Time Series Imputation.} \label{tab:imputation_papers} \\
    
    \toprule
    \textbf{Paper Title} & \textbf{arXiv} & \textbf{GitHub} & \textbf{Date} \\ 
    \midrule
    \endfirsthead
    
    \multicolumn{4}{c}{{\bfseries \tablename\ \thetable{} -- continued from previous page}} \\
    \toprule
    \textbf{Paper Title} & \textbf{arXiv} & \textbf{GitHub} & \textbf{Date} \\ 
    \midrule
    \endhead
    
    \midrule
    \multicolumn{4}{r}{{Continued on next page}} \\
    \bottomrule
    \endfoot
    
    \bottomrule
    \endlastfoot

    % --- Training Set Sources ---
    \multicolumn{4}{l}{\textit{\textbf{Training Set Sources}}} \\ \midrule
    Attention Is All You Need & \url{https://arxiv.org/pdf/1706.03762} & \multicolumn{1}{c}{--} & 2017.06 \\ 
    Reformer: The Efficient Transformer & \url{https://arxiv.org/pdf/2001.04451} & \url{https://github.com/google/trax} & 2020.01 \\ 
    Informer: Beyond Efficient Transformer & \url{https://arxiv.org/pdf/2012.07436} & \url{https://github.com/zhouhaoyi/Informer2020} & 2020.12 \\ 
    Autoformer: Decomposition Transformers & \url{https://arxiv.org/pdf/2106.13008} & \url{https://github.com/thuml/Autoformer} & 2021.06 \\ 
    Pyraformer: Low-Complexity Pyramidal Attention & \url{https://openreview.net/pdf?id=0EXmFzUn5I} & \url{https://github.com/ant-research/Pyraformer} & 2022.01 \\ 
    FEDformer: Frequency Enhanced Transformer & \url{https://arxiv.org/pdf/2201.12740} & \url{https://github.com/MAZiqing/FEDformer} & 2022.01 \\ 
    ETSformer: Exponential Smoothing Transformers & \url{https://arxiv.org/pdf/2202.01381} & \url{https://github.com/salesforce/ETSformer} & 2022.02 \\ 
    Are Transformers Effective for Forecasting? & \url{https://arxiv.org/pdf/2205.13504} & \url{https://github.com/cure-lab/LTSF-Linear} & 2022.05 \\ 
    Non-stationary Transformers & \url{https://arxiv.org/pdf/2205.14415} & \url{https://github.com/thuml/Nonstationary_Transformers} & 2022.05 \\ 
    FiLM: Frequency improved Legendre Memory & \url{https://arxiv.org/pdf/2205.08897} & \url{https://github.com/tianzhou2011/FiLM/} & 2022.05 \\ 
    LightTS: Lightweight Time Series Classification & \url{https://arxiv.org/pdf/2302.12721} & \url{https://github.com/d-gcc/LightTS} & 2023.02 \\ 
    TimesNet: Temporal 2D-Variation Modeling & \url{https://arxiv.org/pdf/2210.02186} & \url{https://github.com/thuml/TimesNet} & 2022.10 \\ 
    Crossformer: Utilizing Cross-Dim Dependency & \url{https://openreview.net/pdf?id=vSVLM2j9eie} & \url{https://github.com/Thinklab-SJTU/Crossformer} & 2023.02 \\ 
    MICN: Multi-scale Local and Global Context & \url{https://openreview.net/pdf?id=zt53IDUR1U} & \url{https://github.com/wanghq21/MICN} & 2023.02 \\ 
    TiDE: Time-series Dense Encoder & \url{https://arxiv.org/pdf/2304.08424} & \multicolumn{1}{c}{--} & 2023.04 \\ 

    % --- Validation Set Source ---
    \midrule
    \multicolumn{4}{l}{\textit{\textbf{Validation Set Sources}}} \\ \midrule
    iTransformer: Inverted Transformers & \url{https://arxiv.org/pdf/2310.06625} & \url{https://github.com/thuml/iTransformer} & 2023.10 \\ 
    TimeMixer: Decomposable Multiscale Mixing & \url{https://arxiv.org/pdf/2405.14616} & \url{https://github.com/kwuking/TimeMixer} & 2024.05 \\ 

% --- Test Set Source ---
    \midrule
    \multicolumn{4}{l}{\textit{\textbf{Test Set Sources}}} \\ \midrule
    Optimal Transport for Time-Series Imputation & \url{https://openreview.net/pdf?id=xPTzjpIQNp} & \url{https://github.com/FMLYD/PSW-I} & 2025.01 \\ 
    TimeFilter: Patch-Specific Spatial-Temporal Graph Filtration for Time Series Forecasting & \url{https://arxiv.org/pdf/2501.13041} & \url{https://github.com/thuml/Time-Series-Library/blob/main/models/TimeFilter.py} & 2025.01 \\ 
    ImputeINR: Time Series Imputation via Implicit Neural Representations for Disease Diagnosis with Missing Data & \url{https://arxiv.org/pdf/2505.10856} & \url{https://github.com/Leanna97/ImputeINR} & 2025.05 \\ 
\end{xltabular}
}

\subsubsection{Long-term Forecast}
Long-term forecasting evaluates the model's ability to capture long-range dependencies. Table~\ref{tab:longterm_papers} provides a comprehensive configuration of the models used, including established MLP-based and Transformer-based architectures, as well as the most recent state-of-the-art methods.

{
\small
\renewcommand{\arraystretch}{1.3}
\begin{xltabular}{\textwidth}{
    >{\raggedright\arraybackslash\hsize=0.8\hsize}X 
    >{\raggedright\arraybackslash\hsize=1.1\hsize}X 
    >{\raggedright\arraybackslash\hsize=1.1\hsize}X 
    c
}
    \caption{Detailed Configuration of Models for Long-term Time Series Forecasting.} \label{tab:longterm_papers} \\
    
    \toprule
    \textbf{Paper Title} & \textbf{arXiv} & \textbf{GitHub} & \textbf{Date} \\ 
    \midrule
    \endfirsthead
    
    \multicolumn{4}{c}{{\bfseries \tablename\ \thetable{} -- continued from previous page}} \\
    \toprule
    \textbf{Paper Title} & \textbf{arXiv} & \textbf{GitHub} & \textbf{Date} \\ 
    \midrule
    \endhead
    
    \midrule
    \multicolumn{4}{r}{{Continued on next page}} \\
    \bottomrule
    \endfoot
    
    \bottomrule
    \endlastfoot

    % --- Training Set Sources ---
    \multicolumn{4}{l}{\textit{\textbf{Training Set Sources}}} \\ \midrule
    Attention Is All You Need & \url{https://arxiv.org/pdf/1706.03762} & \multicolumn{1}{c}{--} & 2017.06 \\ 
    Reformer: The Efficient Transformer & \url{https://arxiv.org/pdf/2001.04451} & \url{https://github.com/google/trax} & 2020.01 \\ 
    Informer: Beyond Efficient Transformer & \url{https://arxiv.org/pdf/2012.07436} & \url{https://github.com/zhouhaoyi/Informer2020} & 2020.12 \\ 
    Autoformer: Decomposition Transformers & \url{https://arxiv.org/pdf/2106.13008} & \url{https://github.com/thuml/Autoformer} & 2021.06 \\ 
    Pyraformer: Low-Complexity Pyramidal Attention & \url{https://openreview.net/pdf?id=0EXmFzUn5I} & \url{https://github.com/ant-research/Pyraformer} & 2022.01 \\ 
    FEDformer: Frequency Enhanced Transformer & \url{https://arxiv.org/pdf/2201.12740} & \url{https://github.com/MAZiqing/FEDformer} & 2022.01 \\ 
    ETSformer: Exponential Smoothing Transformers & \url{https://arxiv.org/pdf/2202.01381} & \url{https://github.com/salesforce/ETSformer} & 2022.02 \\ 
    Are Transformers Effective for Forecasting? & \url{https://arxiv.org/pdf/2205.13504} & \url{https://github.com/cure-lab/LTSF-Linear} & 2022.05 \\ 
    Non-stationary Transformers & \url{https://arxiv.org/pdf/2205.14415} & \url{https://github.com/thuml/Nonstationary_Transformers} & 2022.05 \\ 
    FiLM: Frequency improved Legendre Memory & \url{https://arxiv.org/pdf/2205.08897} & \url{https://github.com/tianzhou2011/FiLM/} & 2022.05 \\ 
    LightTS: Lightweight Time Series Classification & \url{https://arxiv.org/pdf/2302.12721} & \url{https://github.com/d-gcc/LightTS} & 2023.02 \\ 
    TimesNet: Temporal 2D-Variation Modeling & \url{https://arxiv.org/pdf/2210.02186} & \url{https://github.com/thuml/TimesNet} & 2022.10 \\ 
    PatchTST: A Time Series is Worth 64 Words & \url{https://arxiv.org/pdf/2211.14730} & \multicolumn{1}{c}{--} & 2022.11 \\ 
    Crossformer: Utilizing Cross-Dim Dependency & \url{https://openreview.net/pdf?id=vSVLM2j9eie} & \url{https://github.com/Thinklab-SJTU/Crossformer} & 2023.02 \\ 
    MICN: Multi-scale Local and Global Context & \url{https://openreview.net/pdf?id=zt53IDUR1U} & \url{https://github.com/wanghq21/MICN} & 2023.02 \\ 
    TSMixer: An All-MLP Architecture & \url{https://arxiv.org/pdf/2303.06053} & \url{https://github.com/google-research/tsmixer} & 2023.03 \\ 
    TiDE: Time-series Dense Encoder & \url{https://arxiv.org/pdf/2304.08424} & \multicolumn{1}{c}{--} & 2023.04 \\ 
    Koopa: Learning Non-stationary Dynamics & \url{https://arxiv.org/pdf/2305.18803} & \url{https://github.com/thuml/Koopa} & 2023.05 \\ 
    SegRNN: Segment Recurrent Neural Network & \url{https://arxiv.org/pdf/2308.11200} & \multicolumn{1}{c}{--} & 2023.08 \\ 
    iTransformer: Inverted Transformers & \url{https://arxiv.org/pdf/2310.06625} & \url{https://github.com/thuml/iTransformer} & 2023.10 \\ 
    TimeMixer: Decomposable Multiscale Mixing & \url{https://arxiv.org/pdf/2405.14616} & \url{https://github.com/kwuking/TimeMixer} & 2024.05 \\ 
    TimeXer: Empowering Transformers with Exogenous & \url{https://arxiv.org/pdf/2402.19072} & \url{https://github.com/thuml/TimeXer} & 2024.02 \\ 
    WPMixer: Efficient Multi-Resolution Mixing & \url{https://arxiv.org/pdf/2412.17176} & \url{https://github.com/Secure-and-Intelligent-Systems-Lab/WPMixer} & 2024.12 \\ 

    % --- Validation Set Source ---
    \midrule
    \multicolumn{4}{l}{\textit{\textbf{Validation Set Sources}}} \\ \midrule
    Sentinel: Multi-Patch Transformer & \url{https://arxiv.org/pdf/2503.17658} & \multicolumn{1}{c}{--} & 2025.03 \\ 
    Mamba: Linear-Time Sequence Modeling & \url{https://arxiv.org/pdf/2312.00752} & \url{https://github.com/state-spaces/mamba} & 2023.12 \\ 

    % --- Test Set Source ---
    \midrule
    \multicolumn{4}{l}{\textit{\textbf{Test Set Sources}}} \\ \midrule
    Unlocking the Power of LSTM for Long Term & \url{https://ojs.aaai.org/index.php/AAAI/article/view/33303} & \url{https://github.com/Eleanorkong/P-sLSTM} & 2025.01 \\ 
    TimeEmb: Lightweight Static-Dynamic & \url{https://arxiv.org/abs/2510.00461} & \url{https://github.com/showmeon/TimeEmb} & 2025.10 \\ 
    Efficiently Enhancing Long-term Series Forecasting via Ultra-long Lookback Windows & \url{https://ojs.aaai.org/index.php/AAAI/article/view/35386} & \url{https://github.com/SuxinTong/IRPA} & 2025.04 \\ 
    CALF: Aligning LLMs for Time Series Forecasting via Cross-modal Fine-Tuning & \url{https://arxiv.org/pdf/2403.07300} & \url{https://github.com/Hank0626/CALF} & 2025.04 \\ 
\end{xltabular}
}

\subsubsection{Short-term Forecast}
For short-term forecasting, we focus on models capable of capturing immediate temporal patterns. The benchmark includes a selection of standard baselines and emerging architectures designed for shorter prediction horizons, as detailed in Table~\ref{tab:shortterm_papers}.

{
\small
\renewcommand{\arraystretch}{1.3}
\begin{xltabular}{\textwidth}{
    >{\raggedright\arraybackslash\hsize=0.8\hsize}X 
    >{\raggedright\arraybackslash\hsize=1.1\hsize}X 
    >{\raggedright\arraybackslash\hsize=1.1\hsize}X 
    c
}
    \caption{Selected Baselines for Short-term Time Series Forecasting Benchmark.} \label{tab:shortterm_papers} \\
    
    \toprule
    \textbf{Paper Title} & \textbf{arXiv} & \textbf{GitHub} & \textbf{Date} \\ 
    \midrule
    \endfirsthead
    
    \multicolumn{4}{c}{{\bfseries \tablename\ \thetable{} -- continued from previous page}} \\
    \toprule
    \textbf{Paper Title} & \textbf{arXiv} & \textbf{GitHub} & \textbf{Date} \\ 
    \midrule
    \endhead
    
    \midrule
    \multicolumn{4}{r}{{Continued on next page}} \\
    \bottomrule
    \endfoot
    
    \bottomrule
    \endlastfoot

    % --- Training Set Sources ---
    \multicolumn{4}{l}{\textit{\textbf{Training Set Sources}}} \\ \midrule
    Attention Is All You Need & \url{https://arxiv.org/pdf/1706.03762} & \multicolumn{1}{c}{--} & 2017.06 \\ 
    Reformer: The Efficient Transformer & \url{https://arxiv.org/pdf/2001.04451} & \url{https://github.com/google/trax} & 2020.01 \\ 
    Informer: Beyond Efficient Transformer & \url{https://arxiv.org/pdf/2012.07436} & \url{https://github.com/zhouhaoyi/Informer2020} & 2020.12 \\ 
    Autoformer: Decomposition Transformers & \url{https://arxiv.org/pdf/2106.13008} & \url{https://github.com/thuml/Autoformer} & 2021.06 \\ 
    Pyraformer: Low-Complexity Pyramidal Attention & \url{https://openreview.net/pdf?id=0EXmFzUn5I} & \url{https://github.com/ant-research/Pyraformer} & 2022.01 \\ 
    FEDformer: Frequency Enhanced Transformer & \url{https://arxiv.org/pdf/2201.12740} & \url{https://github.com/MAZiqing/FEDformer} & 2022.01 \\ 
    ETSformer: Exponential Smoothing Transformers & \url{https://arxiv.org/pdf/2202.01381} & \url{https://github.com/salesforce/ETSformer} & 2022.02 \\ 
    Non-stationary Transformers & \url{https://arxiv.org/pdf/2205.14415} & \url{https://github.com/thuml/Nonstationary_Transformers} & 2022.05 \\ 
    FiLM: Frequency improved Legendre Memory & \url{https://arxiv.org/pdf/2205.08897} & \url{https://github.com/tianzhou2011/FiLM/} & 2022.05 \\ 
    LightTS: Lightweight Time Series Classification & \url{https://arxiv.org/pdf/2302.12721} & \url{https://github.com/d-gcc/LightTS} & 2023.02 \\ 
    TimesNet: Temporal 2D-Variation Modeling & \url{https://arxiv.org/pdf/2210.02186} & \url{https://github.com/thuml/TimesNet} & 2022.10 \\ 
    Crossformer: Utilizing Cross-Dim Dependency & \url{https://openreview.net/pdf?id=vSVLM2j9eie} & \url{https://github.com/Thinklab-SJTU/Crossformer} & 2023.02 \\ 
    MICN: Multi-scale Local and Global Context & \url{https://openreview.net/pdf?id=zt53IDUR1U} & \url{https://github.com/wanghq21/MICN} & 2023.02 \\ 
    TSMixer: An All-MLP Architecture & \url{https://arxiv.org/pdf/2303.06053} & \url{https://github.com/google-research/tsmixer} & 2023.03 \\ 
    iTransformer: Inverted Transformers & \url{https://arxiv.org/pdf/2310.06625} & \url{https://github.com/thuml/iTransformer} & 2023.10 \\ 

    % --- Validation Set Source ---
    \midrule
    \multicolumn{4}{l}{\textit{\textbf{Validation Set Sources}}} \\ \midrule
    Mamba: Linear-Time Sequence Modeling & \url{https://arxiv.org/pdf/2312.00752} & \url{https://github.com/state-spaces/mamba} & 2023.12 \\ 
    TimeMixer: Decomposable Multiscale Mixing & \url{https://arxiv.org/pdf/2405.14616} & \url{https://github.com/kwuking/TimeMixer} & 2024.05 \\ 

    % --- Test Set Source ---
    \midrule
    \multicolumn{4}{l}{\textit{\textbf{Test Set Sources}}} \\ \midrule
    TimeMixer++: General Time Series Pattern & \url{https://openreview.net/pdf?id=1CLzLXSFNn} & \url{https://github.com/kwuking/TimeMixer} & 2025.05 \\ 
    Chronos-2: From Univariate to Universal Forecasting & \url{https://arxiv.org/pdf/2510.15821} & \url{https://github.com/amazon-science/chronos-forecasting} & 2025.10 \\ 
    SEMPO: Lightweight Foundation Models for Time Series Forecasting & \url{https://arxiv.org/pdf/2510.19710} & \url{https://github.com/mala-lab/SEMPO} & 2025.10 \\ 
    Amortized Control of Continuous State Space Feynman-Kac Model for Irregular Time Series & \url{https://arxiv.org/pdf/2410.05602} & \url{https://github.com/bw-park/ACSSM} & 2025.02 \\ 
    Selective Learning for Deep Time Series Forecasting & \url{https://arxiv.org/pdf/2510.25207} & \url{https://github.com/GestaltCogTeam/selective-learning} & 2025.10 \\ 
\end{xltabular}
}
\subsection{Graph Learning}
\label{sec:graph_learning}

Graph learning addresses the challenge of modeling complex relationships within data. We categorize our benchmark into two distinct sub-tasks: handling noisy labels in static graphs (NoisyGL) and modeling dynamic interactions in temporal graphs (TGL).

\subsubsection{NoisyGL}
The NoisyGL benchmark focuses on Label Noise Graph Learning. It evaluates the ability of Graph Neural Networks (GNNs) to maintain performance when node labels are unreliable or corrupted. Table~\ref{tab:noisygl_papers} summarizes the selected methods, ranging from loss correction strategies to advanced noise-tolerant architectures.

{
\small
\renewcommand{\arraystretch}{1.3}
\begin{xltabular}{\textwidth}{
    >{\raggedright\arraybackslash\hsize=1.3\hsize}X 
    >{\raggedright\arraybackslash\hsize=0.7\hsize}X 
    c
}
    \caption{Summary of Baselines and SOTA Models for Graph Learning with Noisy Labels.} \label{tab:noisygl_papers} \\
    
    \toprule
    \textbf{Paper Title} & \textbf{Source} & \textbf{Date} \\ 
    \midrule
    \endfirsthead
    
    \multicolumn{3}{c}{{\bfseries \tablename\ \thetable{} -- continued from previous page}} \\
    \toprule
    \textbf{Paper Title} & \textbf{Source} & \textbf{Date} \\ 
    \midrule
    \endhead
    
    \midrule
    \multicolumn{3}{r}{{Continued on next page}} \\
    \bottomrule
    \endfoot
    
    \bottomrule
    \endlastfoot

    % --- Training Set Sources ---
    \multicolumn{3}{l}{\textit{\textbf{Training Set Sources}}} \\ \midrule
    Semi-supervised Classification with Graph Convolutional Networks & \url{https://arxiv.org/pdf/1609.02907} & 2016.09 \\ 
    How Powerful are Graph Neural Networks? & \url{https://arxiv.org/pdf/1810.00826} & 2018.10 \\ 
    Training Deep Neural-Networks Using a Noise Adaptation Layer & \url{https://openreview.net/pdf?id=H12GRgcxg} & 2017.02 \\ 
    Making Deep Neural Networks Robust to Label Noise & \url{https://arxiv.org/pdf/1609.03683} & 2016.09 \\ 
    Co-teaching: Robust Training with Extremely Noisy Labels & \url{https://arxiv.org/pdf/1804.06872} & 2018.04 \\ 
    Symmetric Cross Entropy for Robust Learning with Noisy Labels & \url{https://arxiv.org/pdf/1908.06112} & 2019.08 \\ 
    Combating Noisy Labels by Agreement: A Joint Training Method & \url{https://arxiv.org/pdf/2003.02752} & 2020.05 \\ 
    Normalized Loss Functions for Deep Learning with Noisy Labels & \url{https://proceedings.mlr.press/v119/ma20c} & 2020.06 \\ 
    Adversarial Label-Flipping Attack and Defense for GNNs & \url{https://ieeexplore.ieee.org/document/9338299} & 2020.11 \\ 
    NRGNN: Learning a Label Noise Resistant GNN & \url{https://dl.acm.org/doi/10.1145/3447548} & 2021.08 \\ 
    Unified Robust Training for Graph Neural Networks Against Label Noise & \url{https://link.springer.com/chapter/10.1007/978} & 2021.05 \\ 
    Robust Training of Graph Neural Networks via Noise Governance & \url{https://dl.acm.org/doi/abs/10.1145/3539597} & 2023.02 \\ 
    CLNode: Curriculum Learning for Node Classification & \url{https://dl.acm.org/doi/10.1145/3539597} & 2023.02 \\ 
    Learning on Graphs under Label Noise & \url{https://ieeexplore.ieee.org/abstract/document/10096088/} & 2023.06 \\ 
    Noise-robust Graph Learning by Estimating Pairwise Interactions & \url{https://openreview.net/forum?id=r7imkFEAQb} & 2023.10 \\ 
    Robust Node Classification on Graph Data with Graph and Label Noise & \url{https://ojs.aaai.org/index.php/AAAI/article/view/29668} & 2024.03 \\ 
    Contrastive Learning of Graphs under Label Noise & \url{https://www.sciencedirect.com/science/article/pii/S0893} & 2024.04 \\ 

    % --- Validation Set Source ---
    \midrule
    \multicolumn{3}{l}{\textit{\textbf{Validation Set Sources}}} \\ \midrule
    Resurrecting Label Propagation for Graphs with Heterophily & \url{https://dl.acm.org/doi/abs/10.1145/3637528} & 2024.08 \\ 
    Mitigating Label Noise on Graph via Topological Sample Selection & \url{https://proceedings.mlr.press/v235/wu24ae.html} & 2024 \\ 

    % --- Test Set Source ---
    \midrule
    \multicolumn{3}{l}{\textit{\textbf{Test Set Sources}}} \\ \midrule
    Learning from Graph: Mitigating Label Noise via Feature Reconstruction & \url{https://dl.acm.org/doi/epdf/10.1145/3746252} & 2025.11 \\ 
    Training Robust Graph Neural Networks by Modeling Noise Dependencies & \url{https://arxiv.org/pdf/2502.19670} & 2025.02 \\ 
    JOINT GRAPH REWIRING AND FEATURE DENOISING VIA SPECTRAL RESONANCE & \url{https://openreview.net/pdf?id=zBbZ2vdLzH} & 2025.04 \\ 
    DiffGraph: Heterogeneous Graph Diffusion Model & \url{https://arxiv.org/pdf/2501.02313} & 2025.01 \\ 
\end{xltabular}
}

\subsubsection{TGL}
The TGL benchmark is dedicated to Temporal Graph Learning. This task involves modeling the continuous evolution of graph topology and node interactions over time. Table~\ref{tab:tgl_papers} outlines the baselines and recent state-of-the-art methods selected for temporal link prediction and forecasting.

{
\small
\renewcommand{\arraystretch}{1.3}
\begin{xltabular}{\textwidth}{
    >{\raggedright\arraybackslash\hsize=1.3\hsize}X 
    >{\raggedright\arraybackslash\hsize=0.7\hsize}X 
    c
}
    \caption{Selected Models and Sources for Temporal Graph Learning (TGL).} \label{tab:tgl_papers} \\
    
    \toprule
    \textbf{Paper Title} & \textbf{Source} & \textbf{Date} \\ 
    \midrule
    \endfirsthead
    
    \multicolumn{3}{c}{{\bfseries \tablename\ \thetable{} -- continued from previous page}} \\
    \toprule
    \textbf{Paper Title} & \textbf{Source} & \textbf{Date} \\ 
    \midrule
    \endhead
    
    \midrule
    \multicolumn{3}{r}{{Continued on next page}} \\
    \bottomrule
    \endfoot
    
    \bottomrule
    \endlastfoot

    % --- Training Set Sources ---
    \multicolumn{3}{l}{\textit{\textbf{Training Set Sources}}} \\ \midrule
    Temporal Graph Networks for Deep Learning on Dynamic Graphs & \url{https://arxiv.org/pdf/2006.10637} & 2020.06 \\ 
    Learning Representation over Dynamic Graphs & \url{https://openreview.net/pdf?id=HyePrhR5KX} & 2018.12 \\ 
    TLogic: Temporal Logical Rules for Explainable Link Forecasting & \url{https://arxiv.org/pdf/2112.08025} & 2021.12 \\ 
    TimeTraveler: Reinforcement Learning for Temporal KG Forecasting & \url{https://arxiv.org/pdf/2109.04101} & 2021.09 \\ 
    Temporal Knowledge Graph Reasoning Based on Evolutional RL & \url{https://arxiv.org/pdf/2104.10353} & 2021.04 \\ 
    Complex Evolutional Pattern Learning for Temporal KG Reasoning & \url{https://arxiv.org/pdf/2203.07782} & 2022.03 \\ 
    Spatio-Temporal Heterogeneous Graph Neural Networks & \url{https://www.mdpi.com/2079-9292/12/6/1293} & 2023.12 \\ 
    Temporal Graph Benchmark for Machine Learning on Temporal Graphs & \url{https://arxiv.org/pdf/2307.01026} & 2023.07 \\ 

    % --- Validation Set Source ---
    \midrule
    \multicolumn{3}{l}{\textit{\textbf{Validation Set Sources}}} \\ \midrule
    Efficient Neural Common Neighbor for Temporal Graph Link Prediction & \url{https://arxiv.org/pdf/2406.07926} & 2024.06 \\ 
    History repeats itself: A Baseline for Temporal KG Forecasting & \url{https://dl.acm.org/doi/10.24963/ijcai.2024/444} & 2024.08 \\ 

    % --- Test Set Source ---
    \midrule
    \multicolumn{3}{l}{\textit{\textbf{Test Set Sources}}} \\ \midrule
    Revisiting Node Affinity Prediction in Temporal Graphs & \url{https://arxiv.org/pdf/2510.06940} & 2025.10 \\
    DyG-Mamba: Continuous State Space Modeling on Dynamic Graphs & \url{https://arxiv.org/pdf/2408.06966} & 2025.12 \\ 
\end{xltabular}
}

\section{Baselines}
\label{appendix:baselines}

In this section, we provide detailed descriptions of the baseline methods used in our experiments. As stated in the main text, we compare against two categories of representative baseline methods: (i) general-purpose coding agents and (ii) workflow-based agents specifically designed for paper reproduction. All baselines are evaluated under identical experimental conditions as described in Section~\ref{subsec:exp_setup}.

\subsection{General-Purpose Coding Agents}

\subsubsection{ReAct}
\label{appendix:baseline:react}

ReAct is a foundational reasoning-and-acting framework that synergizes reasoning traces and task-specific actions in an interleaved manner. The core paradigm of ReAct augments the agent's action space to include both external actions and internal reasoning traces (thoughts). Specifically, given an observation $o_t$ at time step $t$, the agent generates either an action $a_t$ to interact with the environment or a thought $\hat{a}_t$ to reason over the current context $c_t = (o_1, a_1, \dots, o_{t-1}, a_{t-1}, o_t)$.

The key innovation of ReAct lies in its ability to:
\begin{itemize}
    \item \textbf{Reason to Act:} Generate reasoning traces to decompose tasks, track progress, handle exceptions, and maintain working memory to support action generation.
    \item \textbf{Act to Reason:} Interact with external environments (e.g., Wikipedia API, knowledge bases) to gather additional information that grounds the reasoning process and reduces hallucination.
\end{itemize}

In the context of paper reproduction, ReAct operates by prompting large language models with few-shot in-context examples. Each example consists of a human trajectory of interleaved thoughts, actions, and environment observations. The agent alternates between generating reasoning traces (e.g., ``I need to implement the model architecture described in Section 3'') and executing actions (e.g., writing code, running commands). This paradigm has been widely adopted in agentic systems due to its interpretability and ability to handle complex multi-step tasks.

For our experiments, we implement ReAct following its original prompting setup, where the agent receives the target paper as input and generates code through iterative reasoning and acting cycles. The agent has access to file system operations, code execution capabilities, and web search functionality to retrieve relevant information during the reproduction process.

\subsubsection{OpenHands}
\label{appendix:baseline:openhands}

OpenHands (formerly known as OpenDevin) is a state-of-the-art open-source coding agent platform designed for generalist AI software developers. It has demonstrated strong performance on software engineering benchmarks such as SWE-Bench and various web-based tasks. OpenHands provides a comprehensive framework that enables agents to interact with the world in ways similar to human developers.

The architecture of OpenHands consists of three main components:

\paragraph{\textbf{Agent Abstraction and Event Stream.}} OpenHands defines agents through a simple yet powerful abstraction centered around a \texttt{step} function that takes the current state (including an event stream of past actions and observations) and generates appropriate actions. The event stream architecture provides a chronological collection of all interactions, enabling the agent to maintain context throughout long-horizon tasks.

\paragraph{\textbf{Runtime Environment.}} OpenHands provides a docker-sandboxed operating system equipped with:
\begin{itemize}
    \item A bash shell for executing arbitrary command-line operations
    \item An IPython server for interactive Python code execution
    \item A Chromium-based web browser with BrowserGym integration for web navigation
\end{itemize}

This runtime environment allows agents to perform tasks comparable to human software engineers, including creating and editing complex software, executing code in isolated sandboxes, and browsing websites to gather information.

\paragraph{\textbf{Action Space.}} Inspired by the CodeAct paradigm, OpenHands connects agents with the environment through a core set of general actions:
\begin{itemize}
    \item \texttt{IPythonRunCellAction}: Execute arbitrary Python code
    \item \texttt{CmdRunAction}: Execute bash commands
    \item \texttt{BrowserInteractiveAction}: Interact with web browsers using a domain-specific language
    \item \texttt{FileEditAction}: Create and modify files in the workspace
\end{itemize}

For paper reproduction tasks, OpenHands agents receive the target paper PDF and are tasked with generating a complete, executable implementation. The agent iteratively reads the paper, plans the implementation, writes code files, executes tests, and debugs errors until a working implementation is achieved. OpenHands' powerful action primitives and sandboxed execution environment make it particularly suitable for complex software development tasks.

In our experiments, we configure OpenHands with its default CodeAct-based agent, which has shown strong generalization capabilities across diverse coding tasks. All experiments use identical computational resources and time limits as specified in the fair comparison protocol (Section~\ref{subsec:exp_setup}).

\subsection{Workflow-Based Agents for Paper Reproduction}

\subsubsection{Paper2Code}
\label{appendix:baseline:paper2code}

Paper2Code is a specialized multi-agent LLM framework designed specifically to transform machine learning papers into executable code repositories. Unlike general-purpose coding agents, Paper2Code explicitly models the typical software development lifecycle and decomposes the paper-to-code transformation into three structured stages.

\paragraph{\textbf{Planning Stage.}} The planning stage transforms unstructured paper content into implementation-level abstractions through four sequential subcomponents:

\begin{enumerate}
    \item \textbf{Overall Plan ($o$):} Extracts a high-level summary of core components and functionalities, including model components, training objectives, data processing steps, and evaluation protocols. Formally, $M_{\text{plan}}^{(1)}(R) := \text{LLM}(\mathcal{T}_{\text{plan}}^{(1)}(R)) \to o$, where $R$ is the input paper.
    
    \item \textbf{Architecture Design ($d$):} Defines the repository-level architecture by generating a file list, class diagrams (static representations of classes and attributes), and sequence diagrams (dynamic interactions between components). This is formalized as $M_{\text{plan}}^{(2)}(R, o) := \text{LLM}(\mathcal{T}_{\text{plan}}^{(2)}(R, o)) \to d$.
    
    \item \textbf{Logic Design ($l$):} Specifies the execution flow by producing an ordered file list that dictates the sequence in which files should be implemented, considering inter-file dependencies. This ensures that dependent files are generated after their dependencies: $M_{\text{plan}}^{(3)}(R, o, d) := \text{LLM}(\mathcal{T}_{\text{plan}}^{(3)}(R, o, d)) \to l$.
    
    \item \textbf{Configuration Generation ($g$):} Synthesizes a configuration file (e.g., \texttt{config.yaml}) containing hyperparameters, model settings, and runtime options: $M_{\text{plan}}^{(4)}(R, o, d, l) := \text{LLM}(\mathcal{T}_{\text{plan}}^{(4)}(R, o, d, l)) \to g$.
\end{enumerate}

The complete planning output is $P = \{o, d, l, g\}$.

\paragraph{\textbf{Analysis Stage.}} Following planning, the analysis phase interprets implementation-level details for each file. For each file $f_i$ identified during planning, the analysis agent generates a detailed specification $a_i$ describing functional goals, input-output behaviors, intra- and inter-file dependencies, and algorithmic specifications:
\begin{equation}
M_{\text{analysis}}(R, P, f_i) := \text{LLM}(\mathcal{T}_{\text{analysis}}(R, P, f_i)) \to a_i
\end{equation}

\paragraph{\textbf{Generation Stage.}} The final coding phase generates the complete repository by producing each file sequentially according to the execution order determined in the logic design. For each file $f_i$, the code $c_i$ is generated with full awareness of previously generated files:
\begin{equation}
M_{\text{code}}(R, P, f_i, a_i, \{c_1, \dots, c_{i-1}\}) := \text{LLM}(\mathcal{T}_{\text{code}}(R, P, f_i, a_i, \{c_1, \dots, c_{i-1}\})) \to c_i
\end{equation}

The complete repository is $C = \{c_i\}_{i=1}^{n=|F|}$.

Paper2Code's structured approach explicitly addresses the challenges of maintaining cross-file consistency and handling the complexity of scientific papers. By decomposing the task into planning, analysis, and generation stages, it reduces the cognitive load on the LLM at each step and ensures that implementation decisions are grounded in a coherent architectural plan.

In our experiments, we implement Paper2Code following its official methodology, using the same backbone LLM (Claude Opus 4.5) and evaluation protocol as all other baselines to ensure fair comparison.

\subsubsection{DeepCode}
\label{appendix:baseline:deepcode}

DeepCode is a fully autonomous framework designed for high-fidelity document-to-repository synthesis, with a particular focus on scientific paper reproduction. DeepCode addresses the fundamental conflict between information overload (lengthy, multimodal papers) and context bottlenecks (finite LLM context windows) through principled information-flow management. The framework treats repository synthesis as a channel optimization problem and orchestrates four strategic information operations.

\paragraph{\textbf{Blueprint Generation (Source Compression).}} The first phase distills unstructured, lengthy paper content into a structured, machine-readable implementation blueprint through hierarchical content segmentation and multi-agent specification analysis:

\begin{enumerate}
    \item \textbf{Hierarchical Content Segmentation:} The paper $\mathcal{D}$ is parsed into a structured representation $S = \{s_1, s_2, \dots, s_K\}$ based on section and subsection titles. Each chunk $s_k$ is stored as a key-value pair $(h_k, c_k)$, where $h_k$ is the heading and $c_k$ is the content.
    
    \item \textbf{Multi-Agent Specification Analysis:} Two specialized agents operate in parallel:
    \begin{itemize}
        \item \textit{Concept Agent:} Builds a holistic understanding by mapping the paper's conceptual structure, identifying core contributions, and outlining necessary components for reproduction.
        \item \textit{Algorithm Agent:} Extracts low-level technical details including algorithms, mathematical formulations, model architectures, training procedures, and hyperparameters.
    \end{itemize}
    
    \item \textbf{Blueprint Synthesis:} A Code Planning Agent synthesizes the outputs from both agents into a unified Implementation Blueprint $\mathcal{B}$ containing:
    \begin{itemize}
        \item Project file hierarchy with prioritized implementation order
        \item Component specifications mapping modules to algorithmic pseudocode
        \item Verification protocol defining experimental setup and success criteria
        \item Execution environment specifying dependencies and hardware requirements
        \item Staged development plan with phased implementation roadmap
    \end{itemize}
\end{enumerate}

\paragraph{\textbf{Code Generation (Structured Indexing and Knowledge Injection).}} The second phase synthesizes the code repository while preventing context saturation through two integrated mechanisms:

\begin{enumerate}
    \item \textbf{Stateful Code Memory (CodeMem):} Maintains a compressed, structured representation of the repository state. For each file $c_t$ generated at step $t$, the system:
    \begin{itemize}
        \item Constructs generation context: $\mathcal{X}_t = (\mathcal{B}, \text{SelectRelevantMemory}(\mathcal{M}_{t-1}, \hat{c}_t))$
        \item Generates code: $c_t = \mathcal{L}(\mathcal{X}_t)$
        \item Updates memory: $\mathcal{M}_t = \mathcal{M}_{t-1} \cup \{m_t\}$, where $m_t$ contains:
        \begin{itemize}
            \item Core purpose $\mathcal{P}_t$: Natural language summary of file's role
            \item Public interface $\mathcal{I}_t$: Externally accessible classes and functions
            \item Dependency edges $\mathcal{E}_t$: Afferent and efferent couplings
        \end{itemize}
    \end{itemize}
    
    \item \textbf{CodeRAG (Conditional Knowledge Injection):} Grounds synthesis in a pre-indexed corpus of relevant repositories:
    \begin{itemize}
        \item \textit{Repository Indexing:} Analyzes relevant repositories $\mathcal{R} = \{R_1, R_2, \dots, R_K\}$ to build a structured index $\mathcal{I}$ containing relationship tuples $(c_s', \hat{c}_t, \tau, \sigma, \gamma)$, where $c_s'$ is a source file, $\hat{c}_t$ is a target file, $\tau$ is the relationship type, $\sigma$ is confidence, and $\gamma$ is actionable context.
        \item \textit{Adaptive Retrieval:} At each generation step, the agent decides whether to retrieve external knowledge: $r_t = \delta(\mathcal{X}_t, \hat{c}_t)$. If $r_t = 1$, augmented context is created: $\mathcal{X}_t' = \mathcal{X}_t \cup \{\text{Retrieve}(\mathcal{I}, \hat{c}_t)\}$.
    \end{itemize}
\end{enumerate}

\paragraph{\textbf{Automated Verification (Closed-Loop Error Correction).}} The final phase ensures functional correctness through two sequential stages:

\begin{enumerate}
    \item \textbf{Static Analysis:} An Analysis Agent $\mathcal{A}_{\text{static}}$ inspects the repository $\mathcal{P}$ against the blueprint $\mathcal{B}$ to identify structural discrepancies and code quality deficiencies: $\mathcal{R}_{\text{static}} = \mathcal{A}_{\text{static}}(\mathcal{P}, \mathcal{B})$. A Modification Agent then applies targeted fixes: $\mathcal{P}' = \mathcal{A}_{\text{modify}}(\mathcal{P}, \mathcal{R}_{\text{static}})$.
    
    \item \textbf{Sandbox Execution:} A Sandbox Agent executes the repository in an isolated environment and iteratively corrects errors:
    \begin{equation}
    \mathcal{T}_j = \mathcal{E}_{\text{sandbox}}(\mathcal{P}_j'), \quad \mathcal{P}_{j+1}' = \Phi_{\text{LSP}}(\mathcal{P}_j', \mathcal{T}_j^{\text{error}})
    \end{equation}
    This loop continues until execution succeeds or maximum iterations are reached.
\end{enumerate}

DeepCode's principled information-flow management enables it to maximize the signal-to-noise ratio within finite context windows, effectively addressing the challenges of long-range specification preservation, cross-file consistency, and implicit knowledge gaps. The framework has demonstrated state-of-the-art performance on the PaperBench benchmark, surpassing both commercial agents and human experts on key reproduction metrics.

In our experiments, DeepCode is configured with Claude Opus 4.5 as the backbone LLM, consistent with all other baselines. We adopt its default hyperparameters and follow the official implementation to ensure faithful reproduction of the reported results.

\subsection{Implementation Details}

All baseline methods are evaluated under the fair comparison protocol described in Section~\ref{subsec:exp_setup}:

\begin{itemize}
    \item \textbf{Unified Backbone:} All methods use Claude Opus 4.5 as the backbone LLM with identical API configurations (temperature, max tokens, system prompts).
    \item \textbf{Computational Resources:} Experiments are conducted on a homogeneous cluster of 64 NVIDIA A800 GPUs (80GB memory) with consistent software environments (Python 3.10, PyTorch 2.7, CUDA 12.6).
    \item \textbf{Computational Budget:} Each method is allocated 5 reproduction attempts per paper with a maximum of 50 LLM interaction rounds.
    \item \textbf{Prompt Engineering:} For each baseline, we adopt officially recommended prompt templates and hyperparameters when available; otherwise, we conduct systematic prompt tuning on a held-out validation dataset.
    \item \textbf{Evaluation Pipeline:} All methods receive identical inputs (target paper PDF, fixed evaluation datasets, pre-configured environment dependencies) and are evaluated using the same execution harness with deterministic random seeds and timeout constraints (2 hours per reproduction attempt).
    \item \textbf{Statistical Rigor:} Results are averaged over 5 independent runs with different random seeds, reporting both mean performance and standard deviation.
\end{itemize}

This rigorous protocol ensures that performance differences reflect genuine methodological advantages rather than implementation artifacts or unfair resource allocation.

\section{Prompts}
\label{app:prompts}

\noindent
This appendix provides the prompt templates used at different stages of the pipeline.

\subsection{Full-Pipeline Orchestration}
\label{app:prompts:orchestration}

\paragraph{Reproduction Agent Orchestration.}
This prompt serves as the top-level controller of the full pipeline. It integrates local reusable/adaptable modules, execution feedback, and retrieved global knowledge, and outputs either an initial build plan or an iterative repair plan.
\begin{promptbox}{Reproduction Agent Orchestration}
\begin{lstlisting}[style=prompttext]
---SYSTEM_PROMPT---
You are an expert scientific reproduction agent. Your objective is to produce executable code that faithfully reproduces the target paper under the provided environment constraints.

You must integrate:
- Local implementation units (reuse/adapt/new) from neighbor papers.
- Execution feedback from previous attempts.
- Retrieved collective knowledge from subgraph-level induction.

Core principles:
- Prioritize implementation fidelity over stylistic refactoring.
- Make minimal, high-confidence changes per iteration.
- Explicitly report assumptions when paper details are missing.
- Keep outputs machine-readable.

---USER_PROMPT---
# Task
Generate either:
1. A full implementation draft when no code exists yet; or
2. A targeted repair plan and patch when current code and feedback are provided.

# Inputs
Target Paper ID: {{ target_paper_id }}
Target Paper Method/Experiments:
{{ target_paper_text }}

Optional Initial Code:
{{ current_code }}

Optional Local Modules (from Node-Level Relation-Aware Aggregation outputs):
{{ selected_modules }}

Optional Execution Feedback:
{{ execution_feedback }}

Optional Collective Knowledge Context:
{{ injected_knowledge_context }}

Environment/Constraints:
{{ runtime_constraints }}

# Required Procedure
1. Identify core implementation units required by the target paper.
2. Reuse selected local modules when applicable.
3. Apply collective knowledge only when trigger conditions match.
4. If feedback exists, diagnose root cause before proposing edits.
5. Prefer minimal edits that are testable in the next run.

# Output Format (JSON only)
{
  "mode": "initial_build|iterative_repair",
  "assumptions": ["<assumption or unknown>"],
  "plan": [
    "<ordered implementation or repair step>"
  ],
  "files_to_create_or_modify": [
    "<relative file path>"
  ],
  "code_actions": [
    {
      "file": "<path>",
      "action": "create|modify",
      "content_or_diff": "<code snippet or minimal diff>"
    }
  ],
  "verification": [
    "<what to run/check next>"
  ]
}

# Requirements
- Output JSON only.
- Use "unknown" for missing critical details.
- Do not output narrative outside JSON.

\end{lstlisting}
\end{promptbox}

\subsection{Semantic Scientific Graph Pruning}
\label{app:prompts:ssgp}

\paragraph{Paper Summary for Relevance Assessment.}
This prompt extracts implementation-oriented summaries from the Method and Experiments sections of each paper. The structured summary is used as the input representation for implementation relevance ranking.
\begin{promptbox}{Paper Summary for Relevance Assessment}
\begin{lstlisting}[style=prompttext]
---SYSTEM_PROMPT---
You are an expert in scientific paper reproduction and implementation analysis. Your goal is to extract implementation-critical details from a paper in a compact, structured form.

Key principles:
- Focus strictly on implementation-relevant content.
- Ignore background, theory, and related work.
- If information is missing or ambiguous, return "unknown" rather than guessing.
- Be concise but precise.

---USER_PROMPT---
# Task
Produce a structured, implementation-oriented summary for downstream relevance ranking and reuse analysis.

# Input
Paper ID: {{ paper_id }}
Method and Experiments Text:
{{ method_experiments }}

# Output Format (JSON only)
{
  "paper_id": "<id>",
  "method_summary": "<3-6 sentences describing implementation mechanics>",
  "components": [
    {"name": "<module>", "role": "<what it does>", "notes": "<key implementation detail or unknown>"}
  ],
  "architecture": {
    "backbone": "<model family or unknown>",
    "key_blocks": ["<block names>"],
    "input_outputs": "<input/output shapes or unknown>"
  },
  "training": {
    "optimizer": "<name or unknown>",
    "learning_rate": "<value or unknown>",
    "schedule": "<value or unknown>",
    "batch_size": "<value or unknown>",
    "epochs": "<value or unknown>",
    "losses": ["<loss names>"],
    "regularization": "<weight decay/dropout/etc or unknown>"
  },
  "hyperparameters": {
    "key": "value or unknown"
  },
  "data": {
    "datasets": ["<dataset names>"],
    "preprocessing": "<critical preprocessing or unknown>",
    "splits": "<train/val/test or unknown>"
  },
  "evaluation": {
    "metrics": ["<metric names>"],
    "protocol": "<evaluation setup or unknown>"
  },
  "implicit_decisions": [
    "<implementation choices implied but not explicit>"
  ]
}

# Requirements
- Implementation-focused, avoid theory and motivation.
- Use concise, technical language.
- If a detail is missing, output "unknown".
- Output JSON only.
\end{lstlisting}
\end{promptbox}

\paragraph{Implementation Relevance Ranking.}
This prompt is executed by $K$ independent LLM reviewers to perform list-wise ranking over all candidate neighbors by implementation-level relevance. The aggregated mean rank and rank variance are then used to compute neighborhood selection scores in semantic graph pruning.
\begin{promptbox}{Implementation Relevance Ranking}
\begin{lstlisting}[style=prompttext]
---SYSTEM_PROMPT---
You are an expert in implementation-level paper analysis. Rank candidate neighbor papers by whether their code would help reproduce the target paper.

Focus on concrete implementation overlap: architecture, training pipeline, preprocessing, and evaluation. Do not use citation relationships or high-level conceptual similarity.

---USER_PROMPT---
# Task
Given the target summary and a set of candidate summaries, rank ALL candidates by implementation relevance to the target paper (higher relevance = smaller rank).

# Input
Target Summary:
{{ target_summary }}

Candidate Summaries (list):
{{ candidate_summaries }}

# Criteria (most important first)
1. Shared architecture and components (backbone, blocks, encoders)
2. Similar training pipeline and evaluation protocol
3. Reuse potential of modules or code structure
4. Overlap in loss functions, preprocessing, or hyperparameter strategy

# Output Format (JSON only)
{
  "ranking": [
    {
      "paper_id": "<candidate id>",
      "rank": <integer, 1 = most relevant>,
      "confidence": <float in [0,1]>,
      "rationale": "<one sentence focused on implementation overlap>",
      "evidence": [
        "<short bullet: overlap or mismatch>"
      ]
    }
  ],
  "unknown": [
    {
      "paper_id": "<candidate id>",
      "missing_fields": ["architecture|training|data|evaluation|implicit_decisions"],
      "note": "<what is missing and how it affects ranking>"
    }
  ]
}

# Requirements
- You MUST rank all candidates; ranks must form a permutation of 1..N (no gaps).
- Implementation-level relevance only; no citation-based arguments.
- If evidence is insufficient, keep the candidate in the ranking but lower confidence and record missing info in "unknown".
- Be concise: rationale must be one sentence.
- Output JSON only.
\end{lstlisting}
\end{promptbox}

\subsection{Node-Level Relation-Aware Aggregation}
\label{app:prompts:nodelevel}

\paragraph{Implementation Unit Relation Analysis.}
This prompt identifies reusable units, adaptable units, and newly required units between the target paper and each neighbor paper. It produces structured relation annotations used in downstream API encapsulation.
\begin{promptbox}{Implementation Unit Relation Analysis}
\begin{lstlisting}[style=prompttext]
---SYSTEM_PROMPT---
You are an expert in code reuse and adaptation for scientific reproduction. Compare a target paper to a neighbor paper with available code and identify reusable, adaptable, and new implementation units.

Use module-level granularity (e.g., encoder, loss, sampler, data loader, training loop, evaluation). Provide concrete, actionable adaptation instructions.

---USER_PROMPT---
# Task
Identify implementation-unit relations between the target paper and the neighbor paper.

# Input
Target Paper Text:
{{ target_paper_text }}

Neighbor Paper Text:
{{ neighbor_paper_text }}

Neighbor Code Index (files/functions/classes):
{{ neighbor_code_index }}

# Output Format (JSON only)
{
  "reusable_units": [
    {
      "unit_name": "<module>",
      "description": "<why reusable>",
      "code_location": "<file:line or symbol>",
      "evidence": "<text evidence or unknown>",
      "risk": "<low|medium|high>"
    }
  ],
  "adaptable_units": [
    {
      "unit_name": "<module>",
      "description": "<difference>",
      "code_location": "<file:line or symbol>",
      "diff_instruction": "<concrete modification>",
      "evidence": "<text evidence or unknown>",
      "risk": "<low|medium|high>"
    }
  ],
  "new_units": [
    {
      "unit_name": "<module>",
      "description": "<what to implement>",
      "reason": "<no counterpart in neighbor>",
      "evidence": "<text evidence or unknown>"
    }
  ]
}

# Requirements
- Use module-level granularity.
- diff_instruction must be actionable and specific.
- Prefer short, precise evidence.
- Output JSON only.
\end{lstlisting}
\end{promptbox}

\paragraph{API Encapsulation for Reuse/Adapt/New Units.}
This prompt converts analyzed implementation units into callable APIs. It preserves reusable interfaces, applies explicit modifications to adaptable units, and generates placeholders for new units.
\begin{promptbox}{API Encapsulation}
\begin{lstlisting}[style=prompttext]
---SYSTEM_PROMPT---
You are an expert developer. Package implementation units into callable, self-contained APIs suitable for reuse or adaptation.

Follow existing interfaces where possible and keep changes minimal. For new units, provide a stub with a clear TODO.

---USER_PROMPT---
# Task
Encapsulate implementation units into callable APIs.

# Input
Relation Annotation:
{{ relation_annotation }}

Extracted Code Snippets:
{{ code_snippets }}

# Output Format (JSON only)
[
  {
    "api_name": "<name>",
    "kind": "reuse|adapt|new",
    "source": "<paper_id or file path>",
    "signature": "<function signature>",
    "dependencies": ["<import or helper>"],
    "code": "<complete python code for the API>",
    "notes": "<brief notes or unknown>"
  }
]

# Requirements
- Preserve original interfaces for reuse/adapt unless required by diff_instruction.
- For adapt, apply the diff_instruction directly and mention it in notes.
- For new, output a stub that raises NotImplementedError with a TODO.
- Output JSON only.
\end{lstlisting}
\end{promptbox}

\paragraph{Neighborhood Aggregation and Module Selection.}
This prompt selects the best candidate API for each implementation unit according to edge weight and reuse bias. The result is the initial implementation scaffold aggregated from the pruned neighborhood.
\begin{promptbox}{Neighborhood Aggregation and Selection}
\begin{lstlisting}[style=prompttext]
---SYSTEM_PROMPT---
You are an expert in selecting implementation modules based on relevance weights and reuse preference. Assemble the best initial implementation from candidate APIs.

---USER_PROMPT---
# Task
Select the best API for each implementation unit.

# Input
Candidate APIs per unit:
{{ candidate_apis }}

Edge Weights:
{{ edge_weights }}

Reuse Bias (beta): {{ beta }}

# Scoring Rule
p(api_i) = w(vt, vi) + beta * I[api_i is reuse]

# Output Format (JSON only)
{
  "selected": [
    {
      "unit_name": "<module>",
      "chosen_api": "<api_name>",
      "score": "<numeric>",
      "reason": "<short reason>",
      "alternatives": ["<api_name>"]
    }
  ],
  "deferred": [
    {
      "unit_name": "<module>",
      "reason": "<no suitable api>",
      "next_step": "<suggested action>"
    }
  ]
}

# Requirements
- Prefer reuse when scores are close.
- Provide brief, technical reasons.
- Output JSON only.
\end{lstlisting}
\end{promptbox}

\subsection{Execution-Feedback Refinement}
\label{app:prompts:exec}

\paragraph{Execution-Feedback Diagnosis and Repair Plan.}
This prompt takes current code, runtime feedback, and metric gaps as input, and outputs a structured repair plan. The plan specifies edit targets, concrete code modifications, and expected verification outcomes for iterative refinement.
\begin{promptbox}{Execution-Feedback Refinement}
\begin{lstlisting}[style=prompttext]
---SYSTEM_PROMPT---
You are an expert debugging and reproduction agent. Diagnose execution feedback and propose precise, minimal code edits.

Prioritize root-cause analysis and actionable changes. If evidence is insufficient, propose the smallest safe change and state the uncertainty.

---USER_PROMPT---
# Task
Analyze execution feedback and propose a repair plan.

# Input
Target Paper Text:
{{ target_paper_text }}

Current Implementation:
{{ current_code }}

Execution Feedback:
{{ execution_feedback }}

Current Metrics:
{{ current_metrics }}

Reference Metrics:
{{ reference_metrics }}

# Output Format (JSON only)
{
  "diagnosis": "<what failed or underperformed>",
  "root_cause": "<most likely cause>",
  "edit_units": ["<module or file>"],
  "edits": [
    {
      "file": "<file path>",
      "change_type": "add|modify|delete",
      "diff": "<minimal diff or instruction>",
      "risk": "<low|medium|high>"
    }
  ],
  "expected_outcome": "<how to verify>",
  "fallback": "<alternative if fix fails or unknown>"
}

# Requirements
- Edits must be concrete and implementable.
- If evidence is insufficient, state uncertainty and propose minimal fix.
- Output JSON only.
\end{lstlisting}
\end{promptbox}

\subsection{Graph-Level Knowledge Induction and Injection}
\label{app:prompts:graphlevel}

\paragraph{Subgraph-Level Knowledge Induction.}
This prompt induces recurring, transferable implementation practices from collected $(C^{(\mathrm{ref})}_v, O_v)$ execution-feedback outcomes within each implementation-coherent subgraph. It outputs structured knowledge entries filtered by recurrence and stable reproduction gains on the validation set $V_{\mathrm{val}}$.
\begin{promptbox}{Subgraph-Level Knowledge Induction}
\begin{lstlisting}[style=prompttext]
---SYSTEM_PROMPT---
You are an expert in extracting transferable implementation knowledge from multiple execution-feedback outcomes. Identify recurring practices and pitfalls that generalize across papers in a subgraph.

Only keep patterns that meet the frequency threshold and yield stable reproduction gains on the validation set V_val. Avoid dataset-specific tricks unless explicitly generalizable.

---USER_PROMPT---
# Task
Induce recurring, transferable knowledge from a subgraph of papers.

# Input
Task: {{ task_name }}
Domain: {{ domain }}
Subgraph ID: {{ subgraph_id }}
Minimum Frequency: {{ min_frequency }}

Execution-Feedback Outcomes:
{{ execution_feedback_outcomes }}

# Output Format (JSON only)
[
  {
    "pattern": "<recurring practice or pitfall>",
    "trigger": "<observable condition>",
    "action": "<concrete step>",
    "rationale": "<why it helps>",
    "verification": "<how to check success>",
    "scope": "<applicable models/datasets>",
    "frequency": "<count/total>",
    "confidence": "low|medium|high",
    "evidence": ["<paper_id or short signal>"]
  }
]

# Requirements
- Include only patterns with frequency >= min_frequency and that are supported by stable gains on V_val (otherwise discard).
- Actions must be specific and actionable.
- Output JSON only.
\end{lstlisting}
\end{promptbox}

\paragraph{Knowledge Injection for Target Reproduction.}
This prompt transforms retrieved subgraph knowledge into concise, actionable context aligned with the current target paper. The injected context is consumed by the reproduction agent in generation or repair stages.
\begin{promptbox}{Knowledge Injection}
\begin{lstlisting}[style=prompttext]
---SYSTEM_PROMPT---
You are an expert in translating induced knowledge entries into actionable implementation guidance for a specific target paper.

Your goal is to inject only relevant knowledge, avoid noise, and output compact guidance that can be consumed by the reproduction agent.

---USER_PROMPT---
# Task
Given retrieved subgraph-level knowledge and current target-paper context, produce an actionable injected context package.

# Inputs
Target Paper ID: {{ target_paper_id }}
Target Paper Summary:
{{ target_summary }}

Current Implementation State (optional):
{{ current_state }}

Retrieved Knowledge Entries:
{{ retrieved_knowledge_entries }}

# Selection Rules
1. Keep entries whose trigger clearly matches target context.
2. Prefer high-confidence and higher-frequency entries.
3. Remove conflicting entries; keep the one with stronger evidence.
4. Exclude dataset-specific tricks unless target dataset matches.

# Output Format (JSON only)
{
  "target_paper_id": "<id>",
  "selected_entries": [
    {
      "pattern": "<knowledge pattern>",
      "trigger_match": "<why matched>",
      "action": "<concrete action>",
      "priority": "high|medium|low",
      "evidence": "<frequency/confidence summary>"
    }
  ],
  "rejected_entries": [
    {
      "pattern": "<knowledge pattern>",
      "reason": "<why not used>"
    }
  ],
  "injected_context": [
    "<short actionable instruction for downstream agent>"
  ]
}

# Requirements
- Output JSON only.
- Keep injected_context concise and executable.
- Use "unknown" if a match cannot be determined.
\end{lstlisting}
\end{promptbox}

\end{document}